\documentclass[pdflatex,sn-mathphys-num]{sn-jnl}

\usepackage{graphicx}%
\usepackage{multirow}%
\usepackage{amsmath,amssymb,amsfonts}%
\usepackage{amsthm}%
\usepackage{mathrsfs}%
\usepackage[title]{appendix}%
\usepackage{xcolor}%
\usepackage{textcomp}%
\usepackage{manyfoot}%
\usepackage{booktabs}%
\usepackage{algorithm}%
\usepackage{algorithmicx}%
\usepackage{algpseudocode}%
\usepackage{listings}%
\usepackage{url}
\usepackage{graphicx,amsmath,latexsym,amssymb,amsthm}
\usepackage{multirow}
\usepackage{rotating}
\usepackage{amsfonts}
\usepackage{float}

\usepackage{xcolor} 
\usepackage{lineno}
\usepackage{graphicx}
\usepackage{subcaption}

\theoremstyle{thmstyleone}%
\newtheorem{theorem}{Theorem}
\newtheorem{corollary}[theorem]{Corollary}

\theoremstyle{thmstyletwo}%
\newtheorem{example}{Example}%
\newtheorem{remark}{Remark}%

\theoremstyle{thmstylethree}%
\newtheorem{definition}{Definition}%

\begin{document}
\title[An accurate flatness measure to estimate the generalization performance of CNN models]{An accurate flatness measure to estimate the generalization performance of CNN models}
	
	
	\author*[1,2]{\fnm{Rahman} \sur{Taleghani}}\email{Rahman.Taleghanizolpirani@studenti.unipd.it
	}
	
	\author[3]{\fnm{Maryam} \sur{Mohammadi}}\email{m.mohammadi@khu.ac.ir}
	
	\author[1]{\fnm{Francesco} \sur{Marchetti}}\email{francesco.marchetti@math.unipd.it}
	\affil*[1]{\orgdiv{Department of Mathematics "Tullio Levi-Civita"}, \orgname{University of Padova}, \orgaddress{\street{Via Trieste 63}, \city{Padova}, \postcode{35121}, \state{Veneto}, \country{Italy}}}
	
	\affil[2]{\orgdiv{Department of Computer Science}, \orgname{Ruhr University Bochum}, \orgaddress{\street{University street 140}, \city{Bochum}, \postcode{44801}, \state{North Rhine-Westphalia}, \country{Germany}}}
	
	\affil[3]{\orgdiv{Department of Mathematical Sciences and Computer}, \orgname{Kharazmi University}, \orgaddress{\street{Mofateh Avenue}, \city{Tehran}, \postcode{15719-14911}, \state{Tehran}, \country{Iran}}}
    
	\abstract{Flatness measures based on the spectrum or the trace of the Hessian of the loss are widely used as proxies for the generalization ability of deep networks. However, most existing definitions are either tailored to fully connected architectures, relying on stochastic estimators of the Hessian trace, or ignore the specific geometric structure of modern Convolutional Neural Networks (CNNs). In this work, we develop a flatness measure that is both \emph{exact} and \emph{architecturally faithful} for a broad and practically relevant class of CNNs.
		We first derive a closed-form expression for the trace of the Hessian of the cross-entropy loss with respect to convolutional kernels in networks that use global average pooling followed by a linear classifier. Building on this result, we then specialize the notion of relative flatness to convolutional layers and obtain a parameterization-aware flatness measure that properly accounts for the scaling symmetries and filter interactions induced by convolution and pooling. Finally, we empirically investigate the proposed measure on families of CNNs trained on standard image-classification benchmarks. The results obtained suggest that the proposed measure can serve as a robust tool to assess and compare the generalization performance of CNN models, and to guide the design of architecture and training choices in practice.}

	\keywords{Relative Flatness, Flat Minima, Generalization, Convolutional Neural Networks (CNNs)}
	
	
	\pacs[MSC Classification]{68T07, 62M45, 65F30, 68T05, 49Q12}
	
\maketitle
\section{Introduction}\label{sec 1}
\label{chap:Flatness}

Understanding why neural networks generalize well, despite having millions of parameters and achieving nearly zero training error, remains a fascinating puzzle in deep learning. Recent research has shown that the curvature of the loss surface at a minimum affects generalization. \cite{hochreiter1997flat} introduced the notion of flat minima and showed that it corresponds to simpler and better generalization. Numerous studies have shown that flatter minima are more likely to lead to a better generalization. Among them, \cite{keskar2016large} provided crucial insights into this phenomenon by linking it directly to the training process, specifically to the batch size used in Stochastic Gradient Descent (SGD). They observed that while small-batch SGD (e.g., 32-512 samples) is standard, increasing the batch size leads to a significant degradation in the model's ability to generalize. Their investigation presents evidence that this generalization gap is due to the geometry of the minimum found: large-batch methods tend to converge to sharp minimizers of the loss function, while small-batch methods consistently find flat minimizers. The authors supported the commonly held view that the small-batch gradient estimation is beneficial, allowing the optimizer to escape sharp basins and settle in flatter regions, which correspond to more generalizable solutions. However, \cite{dinh2017sharp} showed that relying on classical Hessian-based methods to measure flatness is problematic. These measures are not only computationally expensive for large models, but are also sensitive to parameter changes (reparameterizations) that leave the model's function and generalization ability completely unchanged. To address the problem of reparameterization, \cite{petzka2021relative} investigated the precise conditions under which the flatness connects to generalization. They related flatness to concepts like data representativeness and feature robustness, allowing them to clearly define the connection. Critically, this work introduced a novel relative flatness measure that is invariant to reparameterization and demonstrates a strong correlation with generalization, thus solving a key issue of traditional Hessian-based metrics. A key theoretical insight in this relative flatness measure is that flatness and generalization can be rigorously connected by decomposing a network \( f \) into a feature extractor \( \phi \) and a final model \( \psi \) such that \( f = \psi \circ \phi \). Within this framework, generalization is linked to the robustness of the model \( \psi \) in the feature space \( \phi(X) \) generated by the backbone, under the assumption that labels vary smoothly and are locally constant in that space.\\

In addition, \cite{andriushchenko2023modern} experimentally showed that in some cases the maximum eigenvalue of the hessian of the loss function and the generalization gap are negatively correlated. \cite{adilova2023fam} derived a regularizer based on relative flatness that works efficiently with arbitrary loss functions. Reinforcing its primary importance, recent work \cite{han2025flatness} used grokking (a regime of delayed generalization) to disentangle the roles of flatness and neural collapse. They found that while both phenomena co-occur near the onset of generalization, only relative flatness consistently predicts it. Notably, models regularized away from flat solutions exhibited significantly delayed generalization, whereas manipulating neural collapse had no such effect, supporting the view that relative flatness is a more fundamental and potentially necessary property for generalization.

Other works also support the connection between the concept of flatness and generalization. In \cite{liu2023same}, the authors showed that models with the same pre-training loss can exhibit different downstream performance and that the flatness of the loss landscape correlates more strongly with transferability than the pre-training loss itself. In \cite{gatmiry2023inductive}, the authors investigated flatness regularization in the context of deep linear networks, showing that minimizing the trace of the Hessian corresponds to promoting low-rank solutions through Schatten-1 norm minimization. 

Although these studies have deepened our understanding of how flatness relates to generalization across various architectures and training regimes, most analyses have focused on fully connected settings. However, convolutional neural networks introduce additional structural properties such as weight sharing, local connectivity, and spatial correlations that fundamentally alter the geometry of the loss landscape. In this paper, we therefore analyze the relative flatness introduced by \cite{petzka2021relative}, with particular emphasis on  convolutional architectures. 
Convolutional layers possess unique properties that distinguish them from fully connected layers. As a result,  directly applying flatness measures designed for fully connected layers to convolutional architectures becomes  computationally prohibitive. Specifically, unrolling a convolutional layer into an equivalent fully connected layer results in an exponentially large number of parameters and unnecessary complexity, making exact curvature calculations extremely costly. Furthermore, common approximations based on the Hessian are often sensitive to parameterization choices, meaning that they do not provide reliable comparisons of generalization  across different network architectures. Therefore, there is a clear need for exact and efficient methods that directly  exploit the convolutional structure.

The goal of this paper is to overcome these limitations by  developing a flatness measure specifically tailored to convolutional layers. We propose an analytical and exact method to calculate the trace of the Hessian of the loss function with respect to convolutional weights. Our approach (i) preserves the spatial arrangement and shared-weight properties of convolutional layers,  (ii) leverages the  patch-based view of convolution  to derive  interpretable curvature measures,  (iii) incorporates global average pooling, thereby simplifying the analysis without  sacrificing generality,  (iv) avoids approximation methods like the \cite{hutchinson1989stochastic} estimator, and (v) delivers accurate results in reasonable time, as demonstrated by our numerical experiments, and we showed that there is a strong positive corelation between our flatness and generalization. Thus, this formulation provides a clear theoretical  foundation and a practical  tool for measuring flatness in Convolutional Neural Networks (CNNs),  bridging the unique structure of convolutional layers with curvature-based analyses of generalization.

The original contribution of this work is twofold, aiming to provide general insights into CNN convergence dynamics rather than architecture-specific artifacts. First, we identify a generalizable symbolic structure in the penultimate layers of modern CNNs. By analyzing the interaction between Global Average Pooling (GAP) and convolutional kernels, we provide a closed-form expression for the Hessian trace that serves as a high-fidelity proxy for the network's overall generalization bottleneck. This allows for exact curvature computation at a cost comparable to standard training, providing a practical diagnostic tool for real-world model selection.\\
Second, we demonstrate the utility of this symbolic measure beyond static analysis. We show it can be used to uncover fundamental optimization phenomena, such as the "Frozen Backbone" paradox in transfer learning and as a prescriptive early stopping criterion. By validating these findings across diverse architectures, we establish that symbolic flatness captures a fundamental property of how convolutional hierarchies adapt to data manifolds.
\subsection*{Related work}The analytical study of curvature in the penultimate layer has gained traction through the recent work of \cite{walter2025flatness, walter2024uncanny}. The authors derived a closed-form expression for relative flatness to provide certificates for local adversarial robustness. A central tenet of their findings is the ``confidence-flatness coupling,'' which suggests that high-confidence predictions in the final classifier inherently manifest as regions of low curvature. While our derivation shares a common mathematical point of departure regarding the Hessian of the cross-entropy loss, we fundamentally diverge in both scientific objective and empirical scope.\\
First, whereas prior work utilizes flatness as a \textit{static metric} to characterize adversarial vulnerability, we reframe it as an \textit{active diagnostic tool} for monitoring training dynamics. By extending the application of this symbolic measure to an \textbf{early stopping criterion} (Section \ref{sec:early_stop}), we demonstrate that flatness provides a more nuanced signal for convergence than traditional validation loss, especially when seeking flatter, more generalizable minima. \\
Second, our work addresses the structural specificity of Convolutional Neural Networks (CNNs). While existing closed-form analyses often treat the backbone as a generic feature extractor, our formulation explicitly incorporates the spatial aggregation mechanics of \textbf{Global Average Pooling (GAP)} and local patch geometry. This allows us to investigate the \textbf{``Frozen Backbone'' paradox in transfer learning} (Section \ref{sec:transfer}), revealing that restricting optimization to the classification head can induce a significant ``sharpness spike'' due to feature-task misalignment—a dynamic phenomenon that goes beyond the purely confidence-based view of flatness.\\
The rest of the paper is organized as follows. 
In Section~\ref{sec:model_setup}, we define the architectural framework and the parameter geometry of convolutional layers under global average pooling. Section~\ref{sec:hessian_comp} provides the formal derivation of our exact closed-form expression for the Hessian trace. Building on these results, Section~\ref{sec:relative_flatness} introduces our reparameterization-invariant convolutional flatness measure and establishes its theoretical connection to generalization. Our empirical results are presented in Section~\ref{result}, where we evaluate the computational efficiency of our method and demonstrate the systematic correlation between flatness and the generalization gap across diverse model populations. Then, we analyze the impact of optimizers and data augmentation. Finally, we discuss practical implications and conclude the paper, with detailed derivations and additional experiments provided in the appendix.

\section{Convolutional neural networks and their parameters geometry}
\label{sec:model_setup}

Standard CNNs are typically built as deep hierarchies of layers, stacking multiple convolution, activation, and pooling operations to extract increasingly complex features. Although the optimization landscape of the entire network is highly complex, we can derive exact geometric insights by focusing on the final building block of modern architectures. Before introducing our approach, we review some important concepts that are needed in the remainder of the paper. 

The convolutional layer is the most fundamental building block of a CNN. By applying filters (small matrices of weights) across the image, it extracts pattern or features like edges, textures, or shapes. More precisely, each filter slides over the image and performs a matrix multiplication at every position, producing an activation map that highlights the parts of the image where the filter’s pattern is strongly present. If multiple filters are considered, the convolutional layer produces a stack of activation maps, each representing different types of features. The pooling layer is another key component of a CNN, whose main purpose is to reduce the size of activation maps while keeping the most important information. By removing less relevant details, pooling reduces the number of parameters the network needs to learn, which helps prevent overfitting. Moreover, pooling also introduces spatial invariance, that is, the CNN can recognize objects even if they appear in different positions, orientations, or have small distortions.
Several pooling strategies have been proposed, among them average pooling \cite{lin2013network}, and max pooling \cite{jarrett2009best} are the most widely used in many works \cite{ajit2020review}. Nevertheless, many modern complex CNN designs apply a Global Average Pooling (GAP) layer just before the final classification, which drastically reduces the number of parameters (see, e.g. \cite{iandola2016squeezenet}).
\\
In this paper, we restrict our analysis to architectures where a final feature-extracting convolutional layer is immediately followed by GAP, which is in turn followed by a softmax output layer. The widely used cross-entropy loss is considered in the training phase to align the network's output with the ground-truth labels; see Figure \ref{our_model}. 
The formulation of the final classifier as a $1 \times 1$ convolutional layer---as utiliz in our numerical experiments in Section \ref{result}---is a deliberate design choice that preserves the spatial inductive biases of the network while enabling exact symbolic trace computation. It is important to emphasize that this configuration is functionally and mathematically equivalent to the Global Average Pooling (GAP) followed by a fully-connected (FC) layer found in most modern backbones. In this framework, the $1 \times 1$ kernels correspond exactly to the rows of a traditional weight matrix, and the GAP operation ensures that the curvature captured by our measure reflects the decision-making bottleneck of the entire feature extractor. Consequently, the insights derived from this penultimate block are not artifacts of a specific implementation, but rather characterize the fundamental relationship between feature manifold geometry and generalization in modern fully-convolutional architectures.
\begin{figure}[h!]
	\centering
	\includegraphics[width=0.8\linewidth]{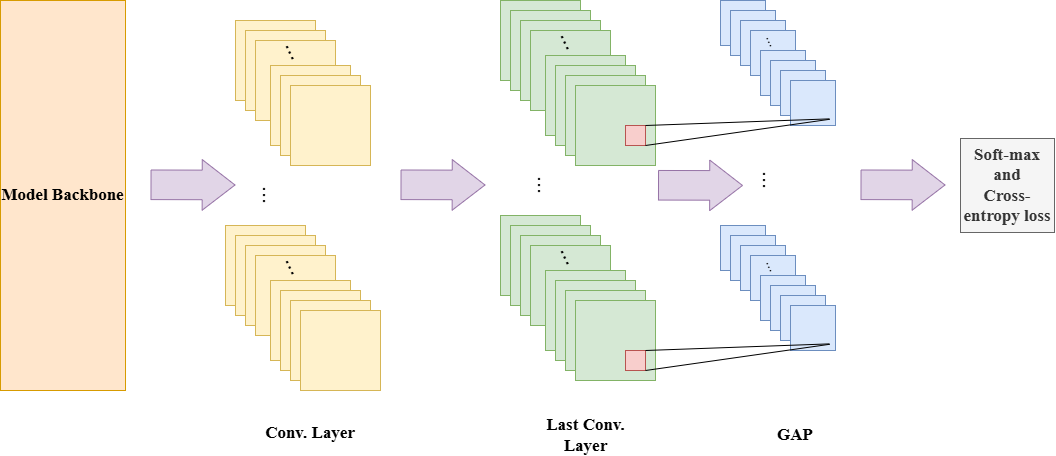}
	\caption{Last layers framework of the model.}
	\label{our_model}
\end{figure}

In what follows, we formalize the parameters' structure of this final block, whose main operations are displayed in Figure \ref{cnn_operation}. Let the input of the final convolutional layer be a three dimensional tensor $x \in \mathbb{R}^{C_{\text{in}} \times H \times W}$, i.e., channel number, height, and width, representing the feature maps produced by the network's backbone. The layer contains $C_{\text{out}}$ filters $\{k_j\}_{j=1}^{C_{\text{out}}}$, each of shape $C_{\text{in}} \times K_H \times K_W$. Each filter is convolved with the input to produce an output tensor $z \in \mathbb{R}^{C_{\text{out}} \times H' \times W'}$, where $H'$ and $W'$ depend on the stride, which is the step size of the kernel as it moves across the input, and padding, that is, the number of pixels added to the input borders to preserve or modify spatial dimensions. 

Each spatial location $(h', w')$ in the output is computed by convolving the same kernel with a corresponding local patch of the input. With the aim of providing a simple formulation for the operation, we consider the vectorized version of this local patch by $\phi_r \in \mathbb{R}^{C_{\text{in}} K_H K_W}$, where $r$ indexes the spatial location flattened as $r = 1, \dots, R$ with $R = H' W'$. Moreover, each kernel $k_j$ can be flattened analogously to $k_j^{\text{vec}} \in \mathbb{R}^{C_{\text{in}} K_H K_W}$. Hence, the convolution operation at location $r$ can be written as a simple inner product:
\begin{equation}\label{zrj}
    z_r^{(j)} = \phi_r^\top k_j^{\text{vec}}, \quad \text{for } j = 1, \dots, C_{\text{out}}.
\end{equation}
By stacking all kernels into a matrix $K \in \mathbb{R}^{C_{\text{out}} \times d}$ with $d = C_{\text{in}} K_H K_W$, and all patches into a matrix $\Phi \in \mathbb{R}^{R \times d}$, the output of the convolutional layer before the application of pooling is:
\begin{equation}\label{Z}
    Z = \Phi K^\top, \quad Z \in \mathbb{R}^{R \times C_{\text{out}}}.
\end{equation}

Then, the GAP reduces the spatial dimensions by averaging each output channel:
\begin{align}\label{z_bar_def}
    \bar{z}^{(j)} = \frac{1}{R} \sum_{r=1}^{R} z_r^{(j)} = \left( \frac{1}{R} \sum_{r=1}^{R} \phi_r \right)^\top k_j^{\text{vec}} = \bar{\phi}^\top k_j^{\text{vec}},
\end{align}
where we defined the \textit{average input patch} as $\bar{\phi} = \frac{1}{R} \sum_{r=1}^{R} \phi_r \in \mathbb{R}^{d}$. Figure \ref{cnn_operation} (c) shows a simple version of GAP operation.

Finally, the pooled logits ${\bar{z}} \in \mathbb{R}^{C_{\text{out}}}$ are passed to the softmax function to produce class probabilities $\hat{y}$, which are compared to the one-hot ground truth $y$ via the cross-entropy loss:
\begin{align}\label{softmax_def}
    \hat{y}^{(j)} = \frac{e^{\bar{z}^{(j)}}}{\sum_{l=1}^{C_{\text{out}}} e^{\bar{z}^{(l)}}}, \quad \mathcal{L} = -\sum_{j=1}^{C_{\text{out}}} y^{(j)} \log \hat{y}^{(j)}.
\end{align}
\begin{figure}[ht]
	\centering
	\begin{minipage}{0.45\textwidth}
		\centering
		\includegraphics[width=\linewidth]{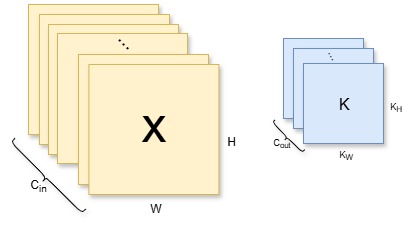}
		\caption*{(a)}
	\end{minipage}
	\hfill
	\begin{minipage}{0.45\textwidth}
		\centering
		\includegraphics[width=\linewidth]{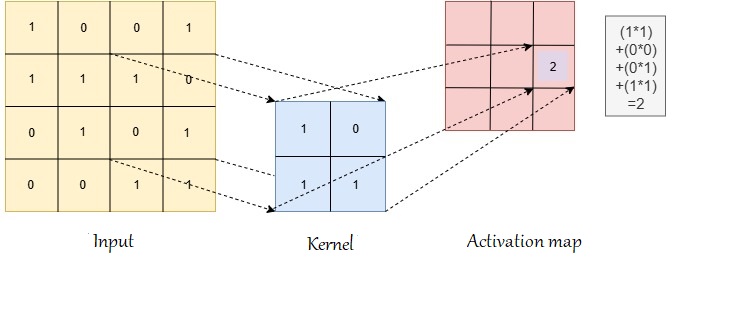}
		\caption*{(b)}
	\end{minipage}
	\hfill
	\begin{minipage}{0.45\textwidth}
		\centering
		\includegraphics[width=\linewidth]{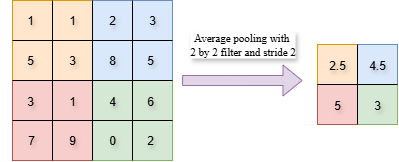}
		\caption*{(c)}
		
	\end{minipage}
    	\caption{a: Illustration of the convolution operation setup. 
		The input tensor $x$ is convolved with a kernel $K$. b: Example of a convolution operation. 
		A $2 \times 2$ kernel is applied to the input feature map. c: Example of a GAP layer. 
		 A $2 \times 2$ kernel is applied to the input feature map when stride is 2. 
       }
         \label{cnn_operation}
\end{figure}

\section{Exact computation of the Hessian trace}
\label{sec:hessian_comp}

In this section, we derive the exact closed-form expression for the trace of the Hessian of the loss with respect to the convolutional weights $K$. We proceed in three steps: first, we analyze the derivative flow for a simplified case to get the underlying intuition; second, we formalize the result for a standard convolutional layer (single batch); finally, we provide the general formula for multiple batches and filters.

\subsection{Single filter}

To understand how applying GAP leads to a simplified curvature expression, we first consider the gradient flow for a single filter. Then, fixed $j\in\{1,\dots,C_{out}\}$, the gradient of the loss with respect to the kernel weights \( k_j^{\text{vec}} \) is computed via the chain rule:
\begin{equation}
\nabla_{k_j^{\text{vec}}} \mathcal{L}=
\frac{\partial \mathcal{L}}{\partial k_j^{\text{vec}}} = \frac{\partial \mathcal{L}}{\partial \bar{z}^{(j)}} \cdot \frac{\partial \bar{z}^{(j)}}{\partial k_j^{\text{vec}}}.
\end{equation}

From \eqref{zrj}, and since the average input patch $\bar{\phi}$ is independent of the weights $k_j$, the derivative of the logit with respect to the kernel is simply the average patch itself:
\begin{equation}
\frac{\partial \bar{z}^{(j)}}{\partial k_j^{\text{vec}}} = \bar{\phi} \in \mathbb{R}^{d}.
\end{equation}
Next, we compute the derivative of the loss with respect to the logit \( \bar{z}^{(j)} \):
\begin{equation}
\frac{\partial \mathcal{L}}{\partial \bar{z}^{(j)}} = \sum_{l=1}^{C_{\text{out}}} \frac{\partial \mathcal{L}}{\partial \hat{y}^{(l)}} \cdot \frac{\partial \hat{y}^{(l)}}{\partial \bar{z}^{(j)}},
\end{equation}
with
\[
\frac{\partial \mathcal{L}}{\partial \hat{y}^{(l)}} = -\frac{y^{(l)}}{\hat{y}^{(l)}}, \quad
\frac{\partial \hat{y}^{(l)}}{\partial \bar{z}^{(j)}} = \hat{y}^{(l)} (\delta_{lj} - \hat{y}^{(j)}),
\]
where $\delta_{lj}$ is the Kronecker delta:
\begin{align}\label{kroneck}
\delta_{lj} = 
\begin{cases}
1 & \text{if } l = j, \\
0 & \text{if } l \ne j.
\end{cases} 
\end{align}
Substituting, we obtain
\begin{align}
\frac{\partial \mathcal{L}}{\partial \bar{z}^{(j)}} &= -\sum_{l=1}^{C_{\text{out}}} \frac{y^{(l)}}{\hat{y}^{(l)}} \cdot \hat{y}^{(l)} (\delta_{lj} - \hat{y}^{(j)}) \\
&= -\sum_{l=1}^{C_{\text{out}}} y^{(l)} (\delta_{lj} - \hat{y}^{(j)}) \\
&= \hat{y}^{(j)} - y^{(j)}.
\end{align}
Thus, the full gradient becomes
\begin{equation}
\nabla_{k_j^{\text{vec}}} \mathcal{L} 
= \left( \hat{y}^{(j)} - y^{(j)} \right) \cdot \bar{\phi}.
\end{equation}

\subsection{One batch \& multiple filters}

Taking into account the full matrix of filters $K$, we can write the Jacobian in compact form as follows:
\[
\frac{\partial \bar{z}}{\partial K} \;=\;
\begin{bmatrix}
\bar{\phi}^{\top} & \mathbf{0} & \cdots & \mathbf{0} \\
\mathbf{0} & \bar{\phi}^{\top} & \cdots & \mathbf{0} \\
\vdots & \vdots & \ddots & \vdots \\
\mathbf{0} & \mathbf{0} & \cdots & \bar{\phi}^{\top}
\end{bmatrix}
\;\in\; \mathbb{R}^{C_{\text{out}} \times (d\,C_{\text{out}})},
\]
where each $\mathbf{0}$ is a row vector of dimension $1 \times d$.
 
By the chain rule for second derivatives,
\[
\nabla^2_K \mathcal{L} = 
\left( \frac{\partial \bar{z}}{\partial K} \right)^\top \cdot \left( \nabla^2_{\bar{z}} \mathcal{L} \right) \cdot \left( \frac{\partial \bar{z}}{\partial K} \right),
\]
where $\nabla^2_{\bar{z}} \mathcal{L} \in \mathbb{R}^{C_{\text{out}} \times C_{\text{out}}}$ is the Hessian of the loss with respect to the logits. For the softmax cross-entropy loss, this Hessian has entries
\[
[\nabla^2_{\bar{z}} \mathcal{L}]_{jl} = 
\begin{cases}
\hat{y}^{(j)} (1 - \hat{y}^{(j)}) & \text{if } j = l, \\
-\hat{y}^{(j)} \hat{y}^{(l)} & \text{if } j \ne l.
\end{cases}
\]

As a clarifying example, consider the case with $C_{\text{out}} = 2$ and $R=4$. We have
\[
\nabla^2_K \mathcal{L} =
\begin{bmatrix}
	\bar{\phi} & \mathbf{0} \\
	\mathbf{0} & \bar{\phi}
\end{bmatrix}^\top
\begin{bmatrix}
	\hat{y}^{(1)} (1 - \hat{y}^{(1)}) & -\hat{y}^{(1)} \hat{y}^{(2)} \\
	-\hat{y}^{(1)} \hat{y}^{(2)} & \hat{y}^{(2)} (1 - \hat{y}^{(2)}) 
\end{bmatrix} 
\begin{bmatrix}
	\bar{\phi} & \mathbf{0} \\
	\mathbf{0} & \bar{\phi}
\end{bmatrix}
\in \mathbb{R}^{8 \times 8}{,}
\]
where each block corresponds to a filter. The trace of the Hessian is:
\[
\operatorname{Tr} \left( \nabla^2_K \mathcal{L} \right)
= \left( \hat{y}^{(1)} (1 - \hat{y}^{(1)}) + \hat{y}^{(2)} (1 - \hat{y}^{(2)}) \right)
\cdot \| \bar{\phi} \|^2.
\]

We can also express $\bar{\phi}$ explicitly in terms of the patch matrix. Let $\Phi= [\phi_1 ~\phi_2~ \phi_3~\phi_4] \in \mathbb{R}^{d\times 4}$, then
\begin{align*}
	\bar{\phi}= \frac{1}{4}\sum_{r=1}^4 \phi_r =   \frac{1}{4}\begin{bmatrix} \sum_{r=1}^4 \Phi_{1r} \\ \sum_{r=1}^4 \Phi_{2r}\\ \vdots \\ \sum_{r=1}^4 \Phi_{dr} \end{bmatrix}{,}  
\end{align*}
and consequently
\begin{align*}
	\|\bar{\phi}\|^2= \frac{1}{16}\sum_{i=1}^d\left(\sum_{r=1}^4 \Phi_{ir}\right)^2{\color{red}.}
\end{align*}
Therefore, the trace is 
\begin{equation}\label{trace1}
	\operatorname{Tr} \left( \nabla^2_K \mathcal{L} \right)
= \left( \hat{y}^{(1)} (1 - \hat{y}^{(1)}) + \hat{y}^{(2)} (1 - \hat{y}^{(2)}) \right)\left(\frac{1}{16}\sum_{i=1}^d\left(\sum_{r=1}^4 \Phi_{ir}\right)^2\right).
\end{equation}

The expression in (\ref{trace1}) generalizes naturally to $C_{\text{out}}$ filters, as we formalize in the following theorem.
		
\begin{theorem}[Trace of the Hessian Under GAP]
\label{thm:gap-trace}
Let \( x \in \mathbb{R}^{C_{\text{in}} \times H \times W} \) be an input image, and let \( \{k_j\}_{j=1}^{C_{\text{out}}} \subset \mathbb{R}^d \) be the vectorized convolutional filters, where $d = C_{\text{in}} K_H K_W$. Suppose the output logits are computed as:
\[
\bar{z}^{(j)} = \langle k_j, \bar{\phi} \rangle,  \quad \bar{\phi} = \frac{1}{R} \sum_{r=1}^R \phi_r
\]
Then, the trace of the Hessian of the cross-entropy loss with respect to all convolutional weights is:
\[
\operatorname{Tr} \left( \nabla^2_K \mathcal{L} \right) = \left( \sum_{j=1}^{C_{\text{out}}} \hat{y}^{(j)} (1 - \hat{y}^{(j)}) \right) \cdot \left\| \bar{\phi} \right\|^2.
\]
\end{theorem}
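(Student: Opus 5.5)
The plan is to express the Hessian as a Kronecker product and take the trace directly. Since each logit $\bar z^{(j)}=\langle k_j,\bar\phi\rangle$ is linear in $K$, the second-derivative chain rule has no curvature term from the logits. That is, the term $\sum_j \frac{\partial \mathcal{L}}{\partial \bar z^{(j)}}\nabla^2_K \bar z^{(j)}$ vanishes identically. This gives exactly the Gauss--Newton form already written above,
\[
\nabla^2_K\mathcal{L}=J^\top \bigl(\nabla^2_{\bar z}\mathcal{L}\bigr) J, \qquad J=\frac{\partial \bar z}{\partial K}=I_{C_{\text{out}}}\otimes \bar\phi^\top ,
\]
with the block-diagonal Jacobian displayed before the statement. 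I would state this linearity explicitly, because it is what makes the formula exact rather than an approximation.

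Next, write $S=\nabla^2_{\bar z}\mathcal{L}=\operatorname{diag}(\hat y)-\hat y\hat y^\top$, which is the softmax cross-entropy logit Hessian already computed in the previous subsection. Using the mixed-product property,
\[
J^\top S J=(I\otimes\bar\phi)\,(S\otimes 1)\,(I\otimes\bar\phi^\top)=S\otimes \bar\phi\bar\phi^\top .
\]
Hence the $(j,l)$ block of the Hessian is the $d\times d$ matrix $S_{jl}\,\bar\phi\bar\phi^\top$. One can also check this entrywise: $\partial^2\mathcal{L}/\partial (k_j)_a\partial (k_l)_b=S_{jl}\bar\phi_a\bar\phi_b$.

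For the trace, $\operatorname{Tr}(A\otimes B)=\operatorname{Tr}(A)\operatorname{Tr}(B)$. Alternatively, one can sum the traces of the diagonal blocks $S_{jj}\bar\phi\bar\phi^\top$. Either way this gives $\operatorname{Tr}(S)\,\|\bar\phi\|^2$. Finally, $\operatorname{Tr}(S)=\sum_j \hat y^{(j)}(1-\hat y^{(j)})$, and the off-diagonal entries $-\hat y^{(j)}\hat y^{(l)}$ drop out of the trace. This yields the claimed formula and reproduces \eqref{trace1} when $C_{\text{out}}=2$.

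There is no real obstacle. The only points needing care are bookkeeping ones. First, the vectorization ordering of $K$ must match the Kronecker structure; using row-major stacking of the $k_j$ is consistent with the displayed Jacobian. Second, one should note that $\bar\phi$ does not depend on $K$, since it comes from the frozen backbone output $x$. That independence is exactly what justifies both the linearity argument and $\nabla^2_K\bar z^{(j)}=0$.
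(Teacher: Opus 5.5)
Your proposal is correct and follows the same route as the paper. The paper's proof also identifies the $(j,j')$ Hessian block as $\hat{y}^{(j)}(\delta_{jj'}-\hat{y}^{(j')})\,\bar\phi\bar\phi^\top$ and sums the traces of the diagonal blocks, which is your $S\otimes\bar\phi\bar\phi^\top$ structure without the Kronecker notation. Your explicit observation that $\nabla^2_K\bar z^{(j)}=0$, because the logits are linear in $K$ and $\bar\phi$ does not depend on $K$, justifies why the form $J^\top(\nabla^2_{\bar z}\mathcal{L})J$ is exact; the paper invokes this only implicitly as ``the chain rule for second derivatives.''
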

		
\begin{proof}
The Hessian block for kernels \( k_j \) and \( k_{j'} \) is:
\[
H_{j,j'} = \hat{y}^{(j)} (\delta_{jj'} - \hat{y}^{(j')}) \cdot \bar{\phi} \bar{\phi}^\top.
\]

Thus, the full Hessian is the block matrix \( H = [H_{j,j'}] \in \mathbb{R}^{C_{\text{out}} d \times C_{\text{out}} d} \), and its trace is given by the sum of traces of diagonal blocks:
\[
\operatorname{Tr}(H) = \sum_{j=1}^{C_{\text{out}}} \hat{y}^{(j)} (1 - \hat{y}^{(j)}) \cdot \operatorname{Tr}(\bar{\phi} \bar{\phi}^\top)
= \left( \sum_{j=1}^{C_{\text{out}}} \hat{y}^{(j)} (1 - \hat{y}^{(j)}) \right) \cdot \| \bar{\phi} \|^2.
\]
\end{proof}
In the appendix, Example~\ref{ex1} provides a detailed example illustrating this framework.


We now establish some theoretical properties of our exact trace computation that provide insights into its behavior and stability.
\begin{corollary}[Lipschitz Continuity of Flatness]\label{lem:lipschitz}
Let $K, K' \in \mathbb{R}^{C_{\text{out}} \times d}$ be two sets of convolutional weights, and let $\mathcal{T}(K)$ denote the trace of the Hessian at $K$ as given in Theorem \ref{thm:gap-trace}. Then there exists a constant $L > 0$ such that:
\[
|\mathcal{T}(K) - \mathcal{T}(K')| \leq L \|K - K'\|_F,
\]
where $\|\cdot\|_F$ denotes the Frobenius norm.
\end{corollary}
\begin{proof}
From Theorem \ref{thm:gap-trace}, we have $\mathcal{T}(K) = \alpha(K) \cdot \|\bar{\phi}\|^2$, where $\alpha(K) = \sum_{j=1}^{C_{\text{out}}} \hat{y}^{(j)}(1-\hat{y}^{(j)})$ depends on the softmax probabilities. Since the softmax function is Lipschitz continuous with respect to its inputs, and the logits are linear in $K$, the composition is Lipschitz continuous. Specifically, the softmax satisfies $|\hat{y}^{(j)}(z) - \hat{y}^{(j)}(z')| \leq \|z - z'\|$ for any logit vectors $z, z'$. Since $\bar{z} = K \bar{\phi}$, we have $\|\bar{z} - \bar{z}'\| \leq \|\bar{\phi}\| \|K - K'\|_F$. The term $\alpha(K)$ is bounded in $[0, C_{\text{out}}/4]$ (maximum when all $\hat{y}^{(j)} = 1/C_{\text{out}}$), and its derivative with respect to the logits is bounded. Therefore, $L = C_{\text{out}} \|\bar{\phi}\|^3$ provides an explicit Lipschitz constant.
\end{proof}
\begin{remark}[Monotonicity Under Gradient Descent]\label{prop:monotonic}
Consider gradient descent updates $K_{t+1} = K_t - \eta \nabla_K \mathcal{L}$ with sufficiently small learning rate $\eta$. If the training loss is decreasing, then the trace $\mathcal{T}(K_t)$ exhibits a general decreasing trend (up to local oscillations) as training progresses toward convergence.
\end{remark}
\begin{proof}
At convergence, we have $\nabla_K \mathcal{L} \approx 0$, which from our derivation implies $\hat{y}^{(j)} \approx y^{(j)}$ for all $j$. The term $\alpha = \sum_{j} \hat{y}^{(j)}(1-\hat{y}^{(j)})$ is minimized when the distribution is concentrated (i.e., one $\hat{y}^{(j)} \approx 1$ and others $\approx 0$). This corresponds to high confidence predictions, which occur at convergence. Initially, when the model is uncertain, $\hat{y}^{(j)} \approx 1/C_{\text{out}}$ for all $j$, yielding $\alpha \approx C_{\text{out}} \cdot (1/C_{\text{out}})(1-1/C_{\text{out}}) = (C_{\text{out}}-1)/C_{\text{out}}$. As training progresses, this decreases toward the minimum value achieved when predictions become confident. While $\|\bar{\phi}\|$ may vary during training due to changes in the feature representation, the dominant effect is typically the reduction in prediction uncertainty, leading to an overall decreasing trend in $\mathcal{T}(K_t)$.
\end{proof}

\subsection{Multiple batches, channels \& filters}

The exact Hessian trace derived in the previous subsection applies to a single batch. Now, we extend our single-batch formulation to handle batches of inputs and multiple-channel convolutions.

Let a batch of \( B \) inputs be denoted \( \{x^{(b)}\}_{b=1}^B \), where each \( x^{(b)} \in \mathbb{R}^{C_{\text{in}} \times H \times W} \). Let the convolutional layer have \( C_{\text{out}} \) filters \( \{k_j\}_{j=1}^{C_{\text{out}}} \), each of shape \( C_{\text{in}} \times K_H \times K_W \), flattened into vectors \( k_j^{\text{vec}} \in \mathbb{R}^d \) where \( d = C_{\text{in}} \cdot K_H \cdot K_W \).
For each input channel $s \in \{1, \ldots, C_{\text{in}}\}$ of batch $b$, we extract $N$ patches $\{\phi^{(b,s)}_i\}_{i=1}^N \subset \mathbb{R}^{K_H K_W}$, where $N$ is the number of spatial positions. The channel-wise average patch is then
\begin{equation}\label{phi_bar}
\bar{\phi}^{(b,s)} = \frac{1}{N} \sum_{i=1}^{N} \phi^{(b,s)}_i \in \mathbb{R}^{K_H K_W}.
\end{equation}

We then get the following, and we refer to Example \ref{ex2} for a detailed example illustrating the multiple batches and filters framework.

\begin{corollary}
Let \( \mathcal{L} \) be the cross-entropy loss over a batch of size \( B \). Then the trace of the Hessian with respect to convolutional weights under global average pooling is:
\begin{align}\label{trr}
\widehat{\operatorname{Tr}} \left( \nabla^2_K \mathcal{L} \right) = \frac{1}{B}\sum_{b=1}^{B} \left(\sum_{s=1}^{C_{\text{in}}}\alpha_s^{(b)} \cdot \left\| \bar{\phi}^{(b,s)} \right\|^2\right),
\end{align}
where \( \alpha^{(b)}_s := \sum_{t=1}^{C_{\text{out}}} \hat{y}_{t,s}^{(b)} \left(1 - \hat{y}_{t,s}^{(b)}\right) \).
\end{corollary}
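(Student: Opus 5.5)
Taking the batch loss to be the average $\mathcal{L} = \frac{1}{B}\sum_{b=1}^B \mathcal{L}^{(b)}$ of the per-sample cross-entropy losses, the proof starts from linearity of the Hessian. Since each $\mathcal{L}^{(b)}$ depends on $K$ only through its own pooled logits $\bar z^{(b)} = K\bar\phi^{(b)}$, we get $\nabla^2_K\mathcal{L} = \frac{1}{B}\sum_b \nabla^2_K \mathcal{L}^{(b)}$. Because the trace is linear, $\operatorname{Tr}(\nabla^2_K\mathcal{L}) = \frac1B\sum_b \operatorname{Tr}(\nabla^2_K\mathcal{L}^{(b)})$. This reduces the claim to a single-sample computation, to which Theorem~\ref{thm:gap-trace} applies directly:
\[
\operatorname{Tr}(\nabla^2_K\mathcal{L}^{(b)}) = \alpha^{(b)}\,\|\bar\phi^{(b)}\|^2, \qquad \alpha^{(b)} = \sum_{t=1}^{C_{\text{out}}}\hat y^{(b)}_t(1-\hat y^{(b)}_t).
\]

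The second step splits $\|\bar\phi^{(b)}\|^2$ over input channels. The full patch vector $\phi_r\in\mathbb{R}^{d}$, with $d=C_{\text{in}}K_HK_W$, is the concatenation of the channel-wise patches $\phi_r^{(b,s)}\in\mathbb{R}^{K_HK_W}$ for $s=1,\dots,C_{\text{in}}$, taken in the same order used to flatten $k_j^{\text{vec}}$. Averaging commutes with concatenation, so $\bar\phi^{(b)}$ is the concatenation of the $\bar\phi^{(b,s)}$ defined in \eqref{phi_bar}, with $N=R$. Hence
\[
\|\bar\phi^{(b)}\|^2 = \sum_{s=1}^{C_{\text{in}}}\|\bar\phi^{(b,s)}\|^2 .
\]
Equivalently, the diagonal block $\bar\phi^{(b)}\bar\phi^{(b)\top}$ of each filter's Hessian has trace equal to the sum of the traces of its channel sub-blocks $\bar\phi^{(b,s)}\bar\phi^{(b,s)\top}$. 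Substituting gives
\[
\frac1B\sum_b\sum_s \alpha^{(b)}\|\bar\phi^{(b,s)}\|^2 .
\]

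The main obstacle is interpreting the channel-indexed coefficient $\alpha^{(b)}_s$ in \eqref{trr}. In the model of Section~\ref{sec:model_setup}, the softmax probabilities $\hat y^{(b)}_t$ depend on $b$ but not on the input channel $s$. I would therefore take $\hat y^{(b)}_{t,s}:=\hat y^{(b)}_t$ for all $s$, which gives $\alpha^{(b)}_s=\alpha^{(b)}$, and the formula follows exactly.

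I would state this identification explicitly and point out where it matters. Suppose instead one wants a genuinely $s$-dependent $\alpha$, as in a per-channel or depthwise reading where each input channel has its own logits. Then the same argument still goes through channel by channel: the Hessian becomes block-diagonal across channels and Theorem~\ref{thm:gap-trace} is applied blockwise. I would also remark that the hat on $\widehat{\operatorname{Tr}}$ signals this averaged-over-batch quantity. This matters because if $\mathcal{L}$ is instead a sum over the batch, the $1/B$ normalization simply rescales the result.
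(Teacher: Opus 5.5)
Your proposal is correct and takes essentially the paper's route. The paper gives no separate proof: it obtains the formula from Theorem~\ref{thm:gap-trace} applied per sample, combined with the channel-wise splitting $\|\bar{\phi}^{(b)}\|^2=\sum_{s}\|\bar{\phi}^{(b,s)}\|^2$ shown in Example~\ref{ex2}, where the softmax factor $\alpha$ is channel-independent and sits outside the channel sum, exactly as in your identification $\hat{y}^{(b)}_{t,s}=\hat{y}^{(b)}_{t}$; your use of linearity of the Hessian and trace for the batch-averaged loss makes the paper's implicit $1/B$ averaging step explicit.
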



This exact formula (\ref{trr}) lays the foundation for our symbolic flatness measure and allows for an efficient measure of flatness. It avoids computing the full Hessian matrix or using any finite difference approximations. The structure also highlights that flatness depends only on the softmax output and the averaged convolutional input. This analysis deliberately focuses on the final layer's parameters, which act as the model operating on the feature representation from the convolutional backbone. As noted by \cite{petzka2021relative} this approach carries a strong assumption: that the feature representation (i.e., the activations passed to the final layer) is reasonable and still contains the necessary information to predict the correct label. Our experiments on a pre-trained model backbone (See Subsection \ref{rel_gen}) are designed to test this hypothesis, as the backbone already provides a rich feature space.


\section{Relative Flatness for Convolutional Layers}
\label{sec:relative_flatness}

With an exact and efficient formula for the Hessian trace under GAP, we are now ready to define a reparameterization-invariant flatness measure for convolutional layers. 

Standard measures of flatness, such as the trace or the maximum eigenvalue of the Hessian, suffer from the reparameterization problem \cite{dinh2017sharp}. Indeed, for example, scaling the weights of a layer by a factor $\lambda$ and the subsequent layer by $1/\lambda$ leaves the network function unchanged, but can arbitrarily alter the Hessian spectrum. To address this issue, relative flatness was proposed in \cite{petzka2021relative}, which is a measure that accounts for the magnitude of the parameters and is invariant to such rescalings. Formally, for a model decomposed into a feature extractor $\phi$ and a final layer parameterized by $W$, \cite{petzka2021relative} defines the relative flatness as:
\begin{equation}\label{eq:petzka_def}
	\kappa_{\operatorname{Tr}}^\phi(W) := \sum_{i, j} \langle w_i, w_j \rangle \cdot \operatorname{Tr} \left( H_{i,j} \right),
\end{equation}
where $w_i$ and $w_j$ represent the weight vectors (filters) for class $i$ and $j$, and $H_{i,j}$ denotes the corresponding $(i,j)$-th block of the Hessian of the loss with respect to these weights. Intuitively, this measure weights the curvature (Hessian trace) by the alignment and magnitude of the filters ($\langle w_i, w_j \rangle$). A lower value indicates a "flatter" minimum relative to the scale of the weights, which has been theoretically linked to better generalization bounds.

While relative flatness was analyzed for fully connected layers, in this section we adapt it to our specific convolutional architecture. By leveraging our exact trace derivation, we provide a closed-form, computationally efficient metric. From Theorem \ref{thm:gap-trace}, we know that the structure of the Hessian blocks $H_{i,j}$ for a single input is determined by the scalar softmax term and the patch geometry. Specifically, the trace of the $(i,j)$-th block is:
\[
\operatorname{Tr}(H_{i,j}) = \hat{y}^{(i)} (\delta_{ij} - \hat{y}^{(j)}) \cdot \| \bar{\phi} \|^2.
\]
 This structural decomposition of the Hessian block trace aligns with recent analytical findings \cite{walter2025flatness, walter2024uncanny}, which suggest that relative flatness at the penultimate layer is intrinsically coupled with classifier confidence. While their work utilizes a general feature-vector representation to study adversarial robustness, our derivation explicitly preserves the spatial properties of the convolutional process. By substituting our patch-based geometric term $\|\bar{\phi}\|^2$ into the relative flatness definition and averaging over a batch of size $B$, we obtain the following measure: 


\begin{definition}[Convolutional Flatness under GAP]
\label{def:conv-flatness}
Let \( \bar{\phi}^{(b,s)} \in \mathbb{R}^{K_H K_W} \) denote the average of all patches extracted from input channel $s$ of sample \( b \). The flatness measure is define as:

\begin{align}\label{flatness}
\kappa(K) = \frac{1}{B}\sum_{b=1}^{B} \left(\sum_{t=1}^{C_{\text{out}}} \langle k_t, k_{t} \rangle \cdot \hat{y}^{(b)}_{t,s} \left(1 - \hat{y}^{(b)}_{t,s}\right)\sum_{s=1}^{C_{\text{in}}} \left\| \bar{\phi}^{(b,s)} \right\|^2\right),
\end{align}
\end{definition}
\noindent
where
\begin{itemize}
\item \( \bar{\phi}^{(b,s)} \): Average of all vectorized convolutional patches for input sample \( b \) {and channel $s$}.
\item \( k_t \): Convolutional filter for class \( t \), vectorized as \( k_t \in \mathbb{R}^{C_{\text{in}} \cdot K_H K_W} \).
\item \( \hat{y}^{(b)}_{t,s} \): Softmax probability of class \( t \) for batch \( b \) and channel {$s$} (after global pooling).
\end{itemize}

The convolution operation extracts patches by sliding a window of size $K_H \times K_W$ over each of the $C_{\text{in}}$ input channels. Let $\phi_i^{(b,s)} \in \mathbb{R}^{K_H K_W}$ denote the vectorized patch at spatial position $i \in \{1, \dots, N\}$ for channel $s$ of input sample $b$, where $N = H_{\text{out}} \cdot W_{\text{out}}$ is the number of convolutional windows. The average patch vector $\bar{\phi}^{(b,s)}$ is then computed as in \eqref{phi_bar}.

Our derived measure $\kappa(K)$ offers several key advantages over previous approaches:

\textbf{1. Separating Geometry from Uncertainty:} The measure in (\ref{flatness}) cleanly separates the contribution of the input data geometry ($\| \bar{\phi} \|^2$) from the model's prediction uncertainty (the $\hat{y}$ terms) and the parameter magnitude ($\langle k_i, k_j \rangle$). 
The term $\sum \langle k_i, k_j \rangle \hat{y}_i (\delta_{ij} - \hat{y}_j)$ represents the ``relative curvature'' of the softmax surface projected onto the weights. The term $\| \bar{\phi} \|^2$ acts as a scaling factor, indicating that inputs with larger feature magnitudes contribute more to the flatness calculation.

\textbf{2. Exactness vs. Approximation:} 
As shown in our experiments (see Section \ref{rel_gen}), Hutchinson estimation introduces stochastic noise that can be significant depending on the number of samples. By contrast, our measure is exact and deterministic for the GAP architecture. This precision is crucial when comparing the subtle differences in generalization gaps between models trained with different hyperparameters.

\textbf{3. Extension to CNNs:} 
While \cite{petzka2021relative} established the theoretical link between relative flatness and generalization, their empirical verification was largely limited to fully connected networks or required simplifying assumptions. Our formulation explicitly accounts for the weight-sharing and spatial averaging inherent in CNNs via the $\bar{\phi}$ term. This allows us to evaluate flatness in realistic computer vision architectures (like ResNet backbones) without simplifying the model structure.

The convolutional relative flatness introduced in this section can be viewed as a parameterization-aware refinement of classical Hessian-trace flatness. By reweighting the block-wise Hessian traces with inner products between class-specific filters, the measure focuses on directions in parameter space that actually modify the logits, while being insensitive to global rescalings and other trivial reparameterizations of the kernels. This makes the resulting quantity more comparable across architectures and training runs, and conceptually aligns it with recent views of flatness that emphasize invariance under symmetry transformations of the network parameters. In convolutional networks with global average pooling, where the features are spatially aggregated before classification, this coupling between curvature and feature geometry is particularly natural and leads to a flatness notion that is tightly linked to the learned convolutional representations.
A critical advantage of the proposed convolutional relative flatness is its invariance to neuron-wise reparameterization. This property is essential for the comparative analysis of models trained under disparate regimes, such as different data augmentation protocols (Section \ref{sec:augmentation}). Unlike standard Hessian trace metrics which are sensitive to weight scales, our relative measure provides a normalized geometric perspective.

\subsection{Generalization Bound}
\label{rel_gen}
With an exact closed-form expression for the Hessian trace under GAP now established, we can define a reparameterization-invariant flatness measure that rigorously connects the geometry of the loss landscape to the model's generalization ability. While standard Hessian-based metrics often fail due to their sensitivity to weight scaling \cite{dinh2017sharp}, the relative flatness framework proposed by \cite{petzka2021relative} provides a theoretically grounded proxy for generalization. By specializing this notion to convolutional architectures, we bridge the gap between abstract curvature measures and the practical spatial hierarchies of CNNs. 
To formally establish this connection, we first define the \textit{generalization gap} within the framework of standard learning theory. Let $f : \mathcal{X} \to \mathcal{Y}$ be a model from a hypothesis class $\mathcal{F}$, and let $S = \{(x_i, y_i)\}_{i=1}^{N}$ be a training set drawn i.i.d. from a distribution $\mathcal{D}$. The generalization gap is the discrepancy between the expected risk ${L}(f) := \mathbb{E}_{(x,y)\sim\mathcal{D}} [\ell(f(x),y)]$ and the empirical risk $\widehat{{L}}_S(f) := \frac{1}{N} \sum_{(x,y)\in S} \ell(f(x),y)$:
\begin{equation}
\label{eq:gen_gap_def}
    {L}_{gen}(f, S) := {L}(f) - \widehat{{L}}_S(f).
\end{equation}
Our choice of the convolutional relative flatness $\kappa(K)$ as a diagnostic tool is justified by its role in upper-bounding this gap. Specifically, under the assumption of smooth data densities on the feature space, we can adapt the following bound.
\begin{theorem}[Generalization Bound via Relative Flatness \cite{petzka2021relative}]
\label{thm:relative_flatness_gen}
Consider a model $f(x, \mathbf{k}) = g(\mathbf{k}\phi(x))$ with a $L$-Lipschitz feature extractor $\phi$ and a linear classifier defined by kernels $\mathbf{k}$. Let $\mathcal{D}$ be a distribution with a smooth density $p_{\mathcal{D}}^{\phi}$ on a feature space $\mathbb{R}^m$. For a finite sample set $S$ of size $|S|$, and assuming approximately locally constant labels of order three, the generalization gap $L_{gen}$ satisfies, with probability $1 - \Delta$:
\begin{equation}
    L_{gen}(f(\cdot, \mathbf{k}), S) \lesssim |S|^{-\frac{2}{4+m}} \left( \frac{\kappa(K)}{2m} + C_1(p_{\mathcal{D}}^\phi, L) + \frac{C_2(p_{\mathcal{D}}^\phi, L)}{\sqrt{\Delta}} \right)
\end{equation}
where $\kappa(K)$ is the relative flatness. The distributional constants $C_1$ and $C_2$ are precisely defined as:
\begin{itemize}
    \item $C_1 = \tau_2 L \left| \int_{z} \nabla^2 \left( p_{\mathcal{D}}^\phi(z) \|z\|^2 \right) dz \right|$.
    \item $C_2 = \sqrt{\alpha \beta} L \sqrt{\text{Vol}(\phi(\mathcal{D}))}$.
\end{itemize}
\end{theorem}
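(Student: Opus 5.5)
The plan is to transfer the Petzka et al.\ generalization bound to the convolutional setting rather than re-derive it from scratch, by identifying the objects in Theorem~\ref{thm:relative_flatness_gen} with the convolutional quantities built earlier. The feature extractor is $\phi(x)=\bar{\phi}(x)$, the GAP-averaged patch vector of \eqref{z_bar_def}. The linear classifier is the kernel matrix $K$, so $\bar z = K\bar\phi$ and $f(x,\mathbf k)=\mathrm{softmax}(K\bar\phi(x))$. Note that $\bar\phi$ is a composition of the backbone with patch extraction and averaging, and both of these are linear maps of operator norm at most $\sqrt{K_HK_W}$ up to stride/overlap factors. Hence $\bar\phi$ is Lipschitz with a constant $L$ equal to the backbone's constant times this factor, and $m=d$.

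First, I would reproduce the two ingredients of the Petzka argument. (i) Decompose the generalization gap as
\[
L_{gen}\;\le\;\big(\text{feature-robustness gap}\big)+\big(\text{representativeness error}\big).
\]
The feature-robustness gap is the expected loss change under multiplicative feature perturbations $z\mapsto z+A z$ with $\|A\|\le\delta$. The representativeness error compares the empirical measure on $\phi(S)$ with a kernel-smoothed density estimate of bandwidth $\delta$. (ii) Taylor-expand the loss in the perturbation to second order. Since $K(z+Az)=(K+KA)z$, the feature perturbation equals a weight perturbation $K\mapsto K+KA$. Averaging over random $A$ turns the quadratic term into $\frac{\delta^2}{2m}\sum_{i,j}\langle k_i,k_j\rangle\operatorname{Tr}(H_{i,j})$, which is exactly \eqref{eq:petzka_def}. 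At this point I would invoke Theorem~\ref{thm:gap-trace} and its batch corollary, which give $\operatorname{Tr}(H_{i,j})=\hat y^{(i)}(\delta_{ij}-\hat y^{(j)})\|\bar\phi\|^2$, so this term becomes $\kappa(K)$. The first-order term vanishes in expectation by symmetry of $A$. The third-order remainder is where the assumption of locally constant labels of order three enters.

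Next, the representativeness term is a kernel density estimation error. Its bias is $\delta^2\tau_2\big|\int\nabla^2(p^\phi_{\mathcal D}(z)\|z\|^2)dz\big|$; rescaling by $L$ gives $C_1$. Its variance is of order $(|S|\delta^m)^{-1}$ times $\alpha\beta\,\mathrm{Vol}(\phi(\mathcal D))$, and Chebyshev with confidence $\Delta$ yields the $C_2/\sqrt\Delta$ term. Balancing $\delta^2$ against $(|S|\delta^m)^{-1/2}$ gives $\delta\sim|S|^{-1/(4+m)}$, so every term is of order $|S|^{-2/(4+m)}$, which is the stated rate.

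The main obstacle is the identification step in the convolutional setting. Definition~\ref{def:conv-flatness} only sums diagonal terms $\langle k_t,k_t\rangle$ and splits $\|\bar\phi\|^2$ per channel, whereas \eqref{eq:petzka_def} includes off-diagonal blocks with $-\hat y^{(i)}\hat y^{(j)}\langle k_i,k_j\rangle$. To handle this, I would first bound the off-diagonal sum. Writing $\Sigma=\mathrm{diag}(\hat y)-\hat y\hat y^\top\succeq0$, the full quantity is $\operatorname{Tr}(\Sigma KK^\top)\|\bar\phi\|^2$. This is at most $\sum_t \hat y_t(1-\hat y_t)\|k_t\|^2\|\bar\phi\|^2$ plus a term that Cauchy--Schwarz controls by the diagonal part. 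So the true relative flatness is bounded by a constant multiple of $\kappa(K)$, and the constant can be absorbed into $\lesssim$. Checking that the per-channel decomposition of $\|\bar\phi\|^2=\sum_s\|\bar\phi^{(s)}\|^2$ is consistent with this bound would be the remaining bookkeeping.
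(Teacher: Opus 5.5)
Your argument follows the same route as the paper's proof in Appendix~\ref{gen_bound}, which likewise just transplants Petzka et al. Both use the split into representativeness and feature robustness. Both use the identity $K(z+Az)=(K+KA)z$ with Haar averaging over $\delta\mathcal{O}_m$, which gives $\frac{\delta^2}{2m}\sum_{i,j}\langle k_i,k_j\rangle\operatorname{Tr}(H_{i,j})+\mathcal{O}(\delta^3)$. Both bound the representativeness term by kernel-density bias and variance and take the bandwidth $\delta=|S|^{-1/(4+m)}$.

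Where you go beyond the paper is the identification step. The paper simply asserts that feature robustness is bounded by its convolutional relative flatness. In doing so it silently equates \eqref{eq:petzka_def}, which contains the off-diagonal terms $-\hat y^{(i)}\hat y^{(j)}\langle k_i,k_j\rangle\|\bar\phi\|^2$, with the diagonal-only $\kappa(K)$ of Definition~\ref{def:conv-flatness}. Your comparison closes that gap, and the constant is explicit. Per sample, with $\Sigma=\mathrm{diag}(\hat y)-\hat y\hat y^\top$ and $\sum_j\hat y^{(j)}=1$,
\begin{align*}
\operatorname{Tr}(\Sigma KK^\top)
&=\tfrac12\sum_{i,j}\hat y^{(i)}\hat y^{(j)}\|k_i-k_j\|^2
\le\sum_{i\neq j}\hat y^{(i)}\hat y^{(j)}\bigl(\|k_i\|^2+\|k_j\|^2\bigr)\\
&=2\sum_{t}\hat y^{(t)}\bigl(1-\hat y^{(t)}\bigr)\|k_t\|^2 .
\end{align*}
After weighting by $\|\bar\phi^{(b)}\|^2$ and averaging over the batch, Petzka's flatness is therefore at most $2\kappa(K)$, and the factor $2$ is absorbed in $\lesssim$. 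The reverse inequality fails: identical kernels make the left side vanish. So $\kappa(K)$ is an upper surrogate rather than the same quantity, but that is the only direction the bound needs. The per-channel bookkeeping is immediate, because the logits sum over channels. Hence $\hat y$ does not depend on $s$, and $\sum_s\|\bar\phi^{(b,s)}\|^2$ is the squared norm of the full average patch.

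Two small corrections.

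\emph{Your reading of $L$.} Your Lipschitz estimate for patch extraction and averaging is never used, since the bound is phrased directly in terms of the feature-space density $p_{\mathcal D}^\phi$. It also cannot be what ``rescaling by $L$'' refers to. In Petzka et al., and in the stray sentence of the paper's appendix, $L$ is the maximal loss. The representativeness error is the integral of the loss against $p_{\mathcal D}^\phi$ minus the smoothed empirical density. The kernel-density bias and variance are therefore multiplied by $\sup\ell$; for cross-entropy this requires bounded features and weights on the relevant region. The ``$L$-Lipschitz'' in the statement is a mislabel.

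\emph{Where the label assumption enters.} The Taylor remainder is $\mathcal{O}(\delta^3)$ by smoothness alone. The locally-constant-labels hypothesis is what lets you write representativeness as an integral of a function of the feature point. It is used in the decomposition, not in the remainder.
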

Detailed proof of the bound is deferred to Appendix \ref{gen_bound}.
This result confirms that solutions residing in flatter regions of the weight space correspond to models with lower expected risk. By explicitly incorporating the spatial aggregation mechanics of CNNs into our exact trace computation, we ensure that our measure is not merely a statistical artifact of classifier confidence—as discussed in recent studies \cite{walter2025flatness}—but is intrinsically faithful to the learned feature geometry of the convolutional kernels. This structural approach provides the necessary theoretical foundation for our empirical analysis of diverse model populations in Section \ref{result}.

\section{Results}\label{result}
In this section, we provide a comprehensive empirical evaluation of the computational efficiency and predictive effectiveness of the convolutional flatness measure introduced in Definition \ref{def:conv-flatness}. The analysis is organized as follows: First, in Subsection \ref{trace_flat_eff}, we benchmark our symbolic trace computation against established state-of-the-art techniques, demonstrating its superior speed and exactness when we are dealing with convolutional layers in bigger size. Building on this foundation, Subsection \ref{gen} investigates the statistical relationship between symbolic flatness and the generalization gap across a diverse population of trained models. Then, as discussed in Chapter \ref{sec:model_setup}, our implementation of the classifier head as a $1\times 1$ convolutional layer is not a restrictive architectural choice, but a deliberate design to enable exact symbolic computation while maintaining the structural integrity of the CNN. We demonstrate that this formulation consistently tracks the generalization gap across diverse backbones, confirming its utility within a wide range of convolutional hierarchies. Finally, we explore the practical utility of our measure in dynamic training scenarios, specifically as an early stopping criterion in Subsection \ref{sec:early_stop} and as a diagnostic tool for transfer learning and fine-tuning strategies in Subsection \ref{sec:transfer}. Collectively, these results indicate that our symbolic approach not only reduces computational overhead but also serves as a robust predictor of generalization performance in modern convolutional architectures.\\ 
All computational experiments were conducted on a workstation equipped with an Intel Core i7-8700 CPU and 32GB of RAM. Accelerated computing was performed using a single NVIDIA RTX 2000 Ada Generation GPU with 16GB of VRAM. To facilitate reproducibility, the source code and experimental scripts are publicly available \footnote{All codes are available in my GitHub(privaterepository)}.

\subsection{Computation of Hessian Trace and Relative Flatness}\label{trace_flat_eff}
{\color{black}{

In this subsection, we evaluate the performance of our proposed symbolic approach against three widely established techniques: \textit{Autograd}, the Hutchinson trace estimation \cite{hutchinson1989stochastic}, and \textit{Functorch}. 
\textit{Autograd} is the standard automatic differentiation engine in PyTorch, it computes exact gradients and higher-order derivatives and is used here as the \textbf{ground truth} reference for accuracy. \textit{Functorch} utilizes vectorization maps (vmap) to compute per-sample gradients and Hessians efficiently. \textit{Hutchinson}'s method provides a stochastic approximation of the Hessian trace using random probe vectors (here, we use 500 samples).

We assess the accuracy and computational efficiency of these methods in computing two key metrics:
\begin{enumerate}
    \item The \textbf{Hessian Trace}: The sum of the diagonal elements of the Hessian matrix.
    \item The \textbf{Relative Flatness}: A derived measure constructed from the Hessian trace (as defined in ~(\ref{flatness})).
\end{enumerate}

Since the relative flatness is analytically dependent on the trace, the accuracy of the flatness computation is directly determined by the precision of the trace estimation. For the experimental setup, we treat the input $x$ as a random tensor where each entry is generated independently from a uniform distribution over $[0, 1)$. We report the results as the mean $\pm$ standard deviation over multiple independent runs (30 times run for each approach). We analyze two distinct configurations of convolutional weights:

\paragraph{Case 1: Constant Weights.} 
In the first scenario, we employ a single convolutional layer where all weights are initialized to one. We evaluate the trace and flatness under varying batch sizes and kernel numbers. Table~\ref{tab:hessian-trace-comparison} presents the comparison of absolute errors (relative to the Autograd ground truth) and computational runtime.

The results indicate that our symbolic method achieves near-zero error (matching the ground truth) while requiring significantly less computational time compared to Autograd and Hutchinson. While Functorch is efficient for smaller inputs, it suffers from memory constraints (Out of Memory - OOM) as the number of kernels increases.

\begin{sidewaystable}[ph!]
    \centering
    \caption{Comparison of Hessian trace and flatness estimation errors and computational time using Autograd (as ground truth), our approach (Symbolic), the Hutchinson approximation (with 500 samples), and Functorch. The table reports absolute errors and methods runtime (in seconds). Configuration: $C_{in}= 3$, $H_{\text{in}} = W_{\text{in}} = 10$, $H_K = W_K=3$, stride set to 1 and no padding, with a subsequent GAP layer. Convolutional weights are set to one.}
    \label{tab:hessian-trace-comparison}
    
    \vspace{10pt} 

    \begin{tabular*}{\textwidth}{@{\extracolsep{\fill}}cclcccc}
        \toprule
        \multicolumn{2}{c}{\textbf{Number of:}} & & \textbf{Autograd} & \textbf{Symbolic} & \textbf{Hutchinson} & \textbf{Functorch} \\
        \cmidrule(r){1-2}
        \textbf{Batches} & \textbf{Kernels} & \textbf{Metric} & \textbf{(Ground Truth)} & \textbf{(Abs. Error)} & \textbf{(Abs. Error)} & \textbf{(Abs. Error)} \\
        \midrule
        
        \multirow{3}{*}{3} & \multirow{3}{*}{5}  
          & Trace    & $4.519 \pm 0.128$ & $\mathbf{4.9\text{e-}7 \pm 4.8\text{e-}7}$ & $0.051 \pm 0.037$ & $6.2\text{e-}7 \pm 4.6\text{e-}7$ \\
        & & Flatness & $366.0 \pm 10.4$ & $\mathbf{4.3\text{e-}5 \pm 4.1\text{e-}5}$ & $4.13 \pm 3.02$ & $5.3\text{e-}5 \pm 4.0\text{e-}5$ \\
        & & Time (s) & $0.028$ & $\mathbf{0.0006}$ & $2.970$ & $0.0030$ \\
        \cmidrule(lr){1-7}

        \multirow{3}{*}{5} & \multirow{3}{*}{10}  
          & Trace    & $6.100 \pm 0.173$ & $\mathbf{2.5\text{e-}6 \pm 1.8\text{e-}6}$ & $0.039 \pm 0.023$ & $2.6\text{e-}6 \pm 1.7\text{e-}6$ \\
        & & Flatness & $1647.0 \pm 46.6$ & $\mathbf{6.7\text{e-}4 \pm 4.9\text{e-}4}$ & $10.50 \pm 6.18$ & $6.9\text{e-}4 \pm 4.7\text{e-}4$ \\
        & & Time (s) & $0.102 \pm 0.039$ & $\mathbf{0.0008}$ & $3.108$ & $0.0033$ \\
        \cmidrule(lr){1-7}

        \multirow{3}{*}{10} & \multirow{3}{*}{10}  
          & Trace    & $6.101 \pm 0.140$ & $\mathbf{6.6\text{e-}5 \pm 4.5\text{e-}5}$ & $0.039 \pm 0.023$ & $6.6\text{e-}5 \pm 4.5\text{e-}5$ \\
        & & Flatness & $1647.2 \pm 37.9$ & $\mathbf{0.018 \pm 0.012}$ & $10.57 \pm 6.17$ & $0.018 \pm 0.012$ \\
        & & Time (s) & $0.112$ & $\mathbf{0.002}$ & $3.623$ & $0.004$ \\
        \cmidrule(lr){1-7}

        \multirow{3}{*}{30} & \multirow{3}{*}{50}  
          & Trace    & $6.631 \pm 0.073$ & $\mathbf{3.3\text{e-}5 \pm 2.1\text{e-}5}$ & $0.015 \pm 0.012$ & $3.3\text{e-}5 \pm 2.1\text{e-}5$ \\
        & & Flatness & $8951.7 \pm 98.8$ & $\mathbf{0.045 \pm 0.029}$ & $20.28 \pm 16.25$ & $0.045 \pm 0.029$ \\
        & & Time (s) & $0.560$ & $\mathbf{0.005}$ & $3.716$ & $0.044$ \\
        \cmidrule(lr){1-7}

        \multirow{3}{*}{40} & \multirow{3}{*}{60}  
          & Trace    & $6.664 \pm 0.067$ & $\mathbf{3.1\text{e-}5 \pm 2.2\text{e-}5}$ & $0.016 \pm 0.011$ & $9.8\text{e-}5 \pm 1.7\text{e-}5$ \\
        & & Flatness & $10796.2 \pm 109.2$ & $\mathbf{0.051 \pm 0.035}$ & $26.12 \pm 17.70$ & $0.158 \pm 0.028$ \\
        & & Time (s) & $0.654$ & $\mathbf{0.007}$ & $3.526$ & $0.100$ \\
        \cmidrule(lr){1-7}

        \multirow{3}{*}{70} & \multirow{3}{*}{80}  
          & Trace    & $6.698 \pm 0.050$ & $\mathbf{1.3\text{e-}5 \pm 9.5\text{e-}6}$ & $0.012 \pm 0.009$ & $1.7\text{e-}5 \pm 1.2\text{e-}5$ \\
        & & Flatness & $14468.6 \pm 109.0$ & $\mathbf{0.029 \pm 0.020}$ & $27.0 \pm 20.3$ & $0.036 \pm 0.026$ \\
        & & Time (s) & $0.861$ & $\mathbf{0.011}$ & $3.452$ & $0.364$ \\
        \cmidrule(lr){1-7}
        
        \multirow{3}{*}{100} & \multirow{3}{*}{100}  
          & Trace    & $6.725 \pm 0.044$ & $\mathbf{2.0\text{e-}5 \pm 1.8\text{e-}5}$ & $0.010 \pm 0.008$ & OOM\footnotemark[1] \\
        & & Flatness & $18157.0 \pm 118.0$ & $\mathbf{0.055 \pm 0.048}$ & $27.10 \pm 22.41$ & OOM\footnotemark[1] \\
        & & Time (s) & $1.074$ & $\mathbf{0.016}$ & $3.407$ & -- \\
        \bottomrule
    \end{tabular*}
    \footnotetext[1]{Out of Memory.}
\end{sidewaystable}

\paragraph{Case 2: Random Weights.} 
In the second scenario, we retain the structure from Case 1 but replace the constant weights with random initialization. To ensure reproducibility, we set a fixed random seed and initialize the weights by sampling from a uniform distribution over $(0, 1)$, scaled by a factor of $10^{-4}$ to yield values in the interval $[0, 10^{-4})$. The results are reported in Table~\ref{tab:hessian-trace-comparison2}.

Consistent with the constant-weight scenario, our symbolic approach maintains superior accuracy and computational efficiency. As the dimensions increase (specifically in the 150 batch/140 kernel configuration), Functorch encounters Out of Memory errors, whereas our method continues to provide precise results with minimal runtime. The Hutchinson method, while avoiding memory issues, consistently produces higher estimation errors compared to the analytical approaches. Next subsection demonstrate the connection between our symbolic flatness and generalization.
\begin{sidewaystable}[ph!]
    \centering
    \caption{Comparison of Hessian trace estimation errors and computational time using Autograd (as ground truth), our approach (Symbolic), the Hutchinson approximation (with 500 samples), and Functorch. The table reports absolute errors and methods runtime (in seconds). Configuration: $C_{in}= 3$, $H_{\text{in}} = W_{\text{in}} = 10$, $H_K = W_K=3$, stride set to 1 and no padding, with a subsequent GAP layer. Convolutional weights are randomly chosen.}
    \label{tab:hessian-trace-comparison2}
    
    \vspace{10pt}

    \begin{tabular*}{\textwidth}{@{\extracolsep\fill}cclcccc}
        \toprule
        \multicolumn{2}{c}{\textbf{Number of:}} & & \textbf{Autograd} & \textbf{Symbolic} & \textbf{Hutchinson} & \textbf{Functorch} \\
        \cmidrule(r){1-2}
        \textbf{Batches} & \textbf{Kernels} & \textbf{Metric} & \textbf{(Ground Truth)} & \textbf{(Abs. Error)} & \textbf{(Abs. Error)} & \textbf{(Abs. Error)} \\
        \midrule
        
        \multirow{3}{*}{3} & \multirow{3}{*}{5}  
          & Trace    & $5.389 \pm 0.193$ & $\mathbf{8.3\text{e-}7 \pm 5.5\text{e-}7}$ & $0.037 \pm 0.029$ & $8.4\text{e-}7 \pm 4.9\text{e-}7$ \\
        & & Flatness & $2.0\text{e-}6 \pm 0.00$ & $\mathbf{0.00 \pm 0.00}$ & $2.0\text{e-}8 \pm 1.0\text{e-}8$ & $0.00 \pm 0.00$ \\
        & & Time (s) & $0.053$ & $\mathbf{0.001}$ & $3.341$ & $0.005$ \\
        \cmidrule(lr){1-7}

        \multirow{3}{*}{4} & \multirow{3}{*}{3}  
          & Trace    & $4.484 \pm 0.135$ & $\mathbf{6.7\text{e-}7 \pm 6.3\text{e-}7}$ & $0.051 \pm 0.036$ & $7.5\text{e-}7 \pm 6.4\text{e-}7$ \\
        & & Flatness & $1.0\text{e-}6 \pm 0.00$ & $\mathbf{0.00 \pm 0.00}$ & $1.0\text{e-}8 \pm 1.0\text{e-}8$ & $0.00 \pm 0.00$ \\
        & & Time (s) & $0.031$ & $\mathbf{0.001}$ & $3.268$ & $0.004$ \\
        \cmidrule(lr){1-7}
        
        \multirow{3}{*}{10} & \multirow{3}{*}{2}  
          & Trace    & $3.389 \pm 0.078$ & $3.2\text{e-}5 \pm 2.2\text{e-}5$ & $0.056 \pm 0.037$ & $\mathbf{6.8\text{e-}6 \pm 9.0\text{e-}7}$ \\
        & & Flatness & $1.0\text{e-}6 \pm 0.00$ & $\mathbf{0.00 \pm 0.00}$ & $1.0\text{e-}8 \pm 1.0\text{e-}8$ & $0.00 \pm 0.00$ \\
        & & Time (s) & $0.024$ & $\mathbf{0.002}$ & $3.797$ & $0.005$ \\
        \cmidrule(lr){1-7}
        
        \multirow{3}{*}{10} & \multirow{3}{*}{10}  
          & Trace    & $6.101 \pm 0.140$ & $5.3\text{e-}5 \pm 4.5\text{e-}5$ & $0.039 \pm 0.023$ & $\mathbf{5.3\text{e-}5 \pm 4.5\text{e-}5}$ \\
        & & Flatness & $5.0\text{e-}6 \pm 0.00$ & $\mathbf{0.00 \pm 0.00}$ & $4.0\text{e-}8 \pm 2.0\text{e-}8$ & $0.00 \pm 0.00$ \\
        & & Time (s) & $0.116 \pm 0.012$ & $\mathbf{0.002}$ & $3.621$ & $0.005$ \\
        \cmidrule(lr){1-7}

        \multirow{3}{*}{20} & \multirow{3}{*}{30}  
          & Trace    & $6.550 \pm 0.097$ & $\mathbf{2.1\text{e-}5 \pm 1.6\text{e-}5}$ & $0.019 \pm 0.016$ & $2.1\text{e-}5 \pm 1.6\text{e-}5$ \\
        & & Flatness & $1.8\text{e-}5 \pm 1.0\text{e-}6$ & $\mathbf{0.00 \pm 0.00}$ & $5.0\text{e-}8 \pm 4.0\text{e-}8$ & $0.00 \pm 0.00$ \\
        & & Time (s) & $0.341$ & $\mathbf{0.004}$ & $3.737$ & $0.012$ \\
        \cmidrule(lr){1-7}

        \multirow{3}{*}{60} & \multirow{3}{*}{70}  
          & Trace    & $6.688 \pm 0.054$ & $\mathbf{1.7\text{e-}4 \pm 2.9\text{e-}5}$ & $0.014 \pm 0.010$ & $1.7\text{e-}4 \pm 2.9\text{e-}5$ \\
        & & Flatness & $4.2\text{e-}5 \pm 1.0\text{e-}6$ & $\mathbf{0.00 \pm 0.00}$ & $9.0\text{e-}8 \pm 6.0\text{e-}8$ & $0.00 \pm 0.00$ \\
        & & Time (s) & $0.787$ & $\mathbf{0.009}$ & $3.690$ & $0.210$ \\
        \cmidrule(lr){1-7}
        
        \multirow{3}{*}{70} & \multirow{3}{*}{70}  
          & Trace    & $6.686 \pm 0.050$ & $1.1\text{e-}5 \pm 7.0\text{e-}6$ & $0.014 \pm 0.010$ & $\mathbf{2.8\text{e-}6 \pm 2.2\text{e-}6}$ \\
        & & Flatness & $4.2\text{e-}5 \pm 1.0\text{e-}6$ & $\mathbf{0.00 \pm 0.00}$ & $9.0\text{e-}8 \pm 6.0\text{e-}8$ & $0.00 \pm 0.00$ \\
        & & Time (s) & $0.786$ & $\mathbf{0.011}$ & $3.531$ & $0.282$ \\
        \cmidrule(lr){1-7}

        \multirow{3}{*}{150} & \multirow{3}{*}{140}  
          & Trace    & $6.747 \pm 0.035$ & $\mathbf{7.7\text{e-}6 \pm 4.8\text{e-}6}$ & $0.009 \pm 0.008$ & OOM\footnotemark[1] \\
        & & Flatness & $8.5\text{e-}5 \pm 1.0\text{e-}6$ & $\mathbf{0.00 \pm 0.00}$ & $1.1\text{e-}7 \pm 1.0\text{e-}7$ & -- \\
        & & Time (s) & $1.428$ & $\mathbf{0.023}$ & $3.317$ & -- \\
        \bottomrule
    \end{tabular*}
    \footnotetext[1]{Out of Memory.}
\end{sidewaystable}

\subsection{Generalization analysis}\label{gen}
The primary objective of our empirical investigation is to validate the predictive power of the proposed symbolic measure in estimating the generalization gap across diverse training regimes. As disscussed in \ref{rel_gen}, in the context of learning theory, generalization is characterized by the model's capacity to maintain stable performance on the test distribution despite being optimized on a limited training set. By analyzing a population of 84 models with varying architectures and optimizers, we aim to demonstrate a consistent monotonic relationship between our architecturally faithful flatness score and the actual discrepancy between training and validation loss. This analysis serves to confirm that flatter solutions, when measured with respect to the specific spatial aggregation mechanics of CNNs, indeed correspond to regions of the weight space that are more robust to distribution shifts.\\
In this subsection, we utilized the convolutional backbone of a pretrained ResNet-18 (\cite{he2016deep}) on the ImageNet (\cite{deng2009imagenet}). To enable the direct computation of the symbolic Hessian trace at the final layer, we replaced the standard fully-connected classifier with a $1\times1$ convolutional layer, followed by Global Average Pooling (GAP) and Softmax. This structure enables the direct calculation of the Hessian trace by exploiting the linearity of the trace operator across independent spatial dimensions and gives us a fully convolutional model.
The transition to a fully-convolutional head (GAP + $1\times1$ Conv) is a strategic choice to analyze the network's terminal decision bottleneck. While replacing the original ResNet-18 linear layers may slightly shift the absolute performance, our empirical results confirm that the models maintain a high accuracy regime (e.g., reaching over 96\% validation accuracy as shown in Table \ref{tab:top4_performance}). This indicates that the structural modification preserves the fundamental feature-to-label mapping, ensuring that our flatness measure captures the true generalization characteristics of high-performing CNNs, rather than architectural artifacts.

The \textit{empirical training loss} of a model with parameters $K$ is defined as
\begin{equation}
\label{eq:emp_train_loss}
\widehat{\mathcal{L}}_{\text{train}}(K)
=
\frac{1}{N}
\sum_{i=1}^{N}
\ell\big(f_K(x_i),\, y_i\big),
\end{equation}
where $f_W(\cdot)$ denotes the network and $\ell(\cdot,\cdot)$ is the cross-entropy loss.  
Analogously, the \textit{validation loss} on unseen samples is given by

\begin{equation}
\label{eq:val_loss}
\mathcal{L}_{\text{val}}(K)
=
\frac{1}{M}
\sum_{i=1}^{M}
\ell\big(f_K(x_i),\, y_i\big).
\end{equation}
We define the \textit{generalization gap} as the discrepancy between the loss on unseen data and the empirical loss:
\begin{equation}
\label{eq:gen_gap}
\operatorname{L}_{gap}(K)
\;=\;
\mathcal{L}_{\text{val}}(K)
\;-\;
\widehat{\mathcal{L}}_{\text{train}}(K).
\end{equation}

Then, we trained 84 independent models on Modified ResNet18 and CIFAR-10 (\cite{krizhevsky2009learning}).\\
To ensure the models converged to distinct local minima with varying geometries, we systematically varied: optimizers (SGD and AdamW), and Hyperparameters (Learning rates spanning orders of magnitude and batch sizes) across multiple random seeds. Each point in our analysis represents a fully trained model (30 epochs). For each solution, we computed the generalization gap (\ref{eq:gen_gap_def}) and our flatness measure ($\kappa(K)$). The relationship between our measure and generalization is visualized in Figure \ref{resnet}(a). We observe a clear positive trend that supports our hypothesis: solutions with lower flatness scores consistently exhibit better generalization. This empirical behavior is formally characterized by Theorem \ref{thm:relative_flatness_gen}, where the generalization gap is upper-bounded by a term scaling with $|S|^{-\frac{2}{4+m}}$. 
As illustrated in Figure \ref{resnet}(b), the theoretical bound provides a robust envelope for our diverse model population, confirming that the relative flatness accurately reflects the learned feature geometry of the convolutional kernels. 

\begin{figure*}[t]
    \centering
    
    \begin{minipage}{0.48\textwidth}
        \centering
        \includegraphics[width=\linewidth]{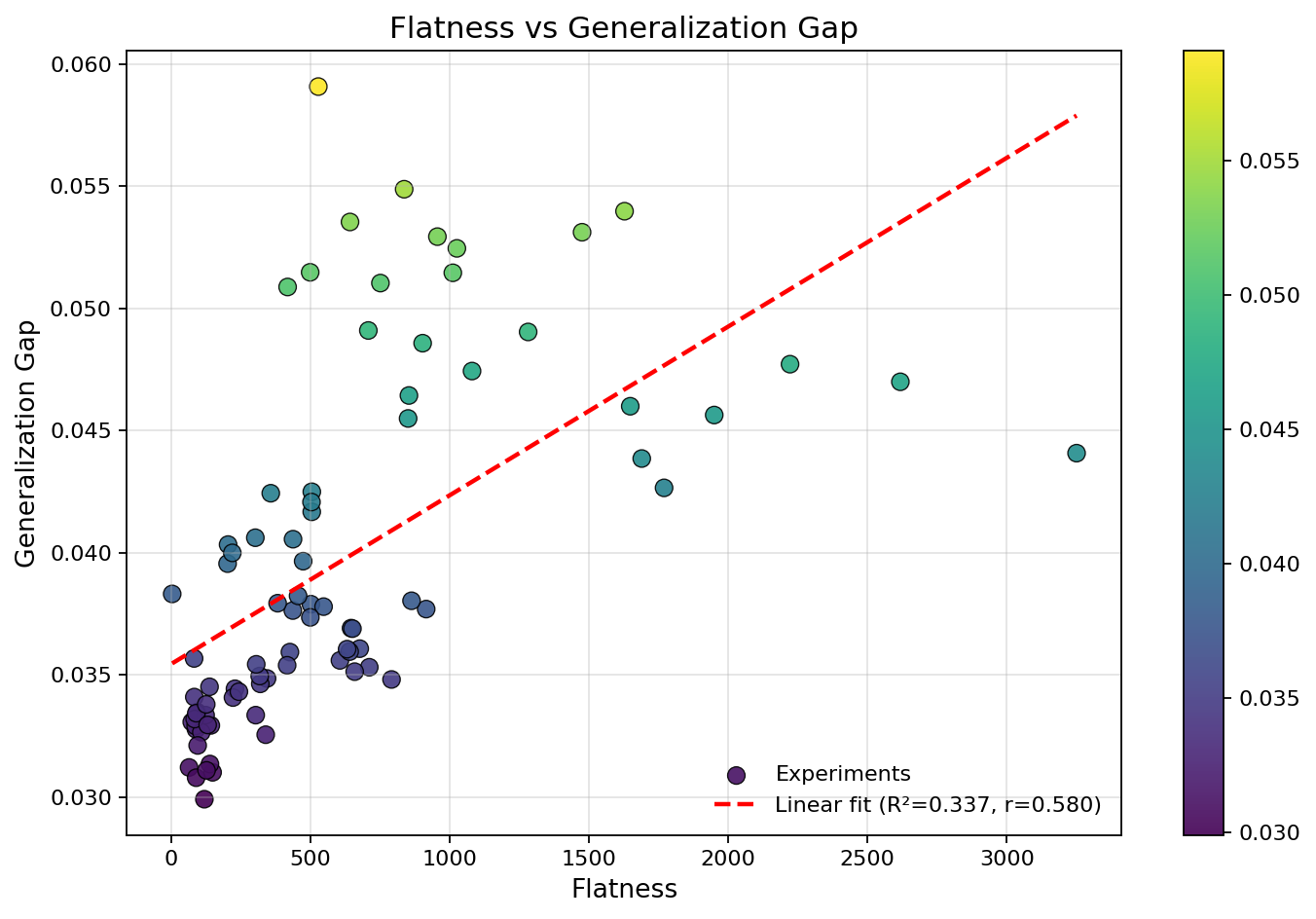}
        \caption*{(a) Linear Correlation}
    \end{minipage}
    \hfill
    \begin{minipage}{0.48\textwidth}
        \centering
        \includegraphics[width=\linewidth]{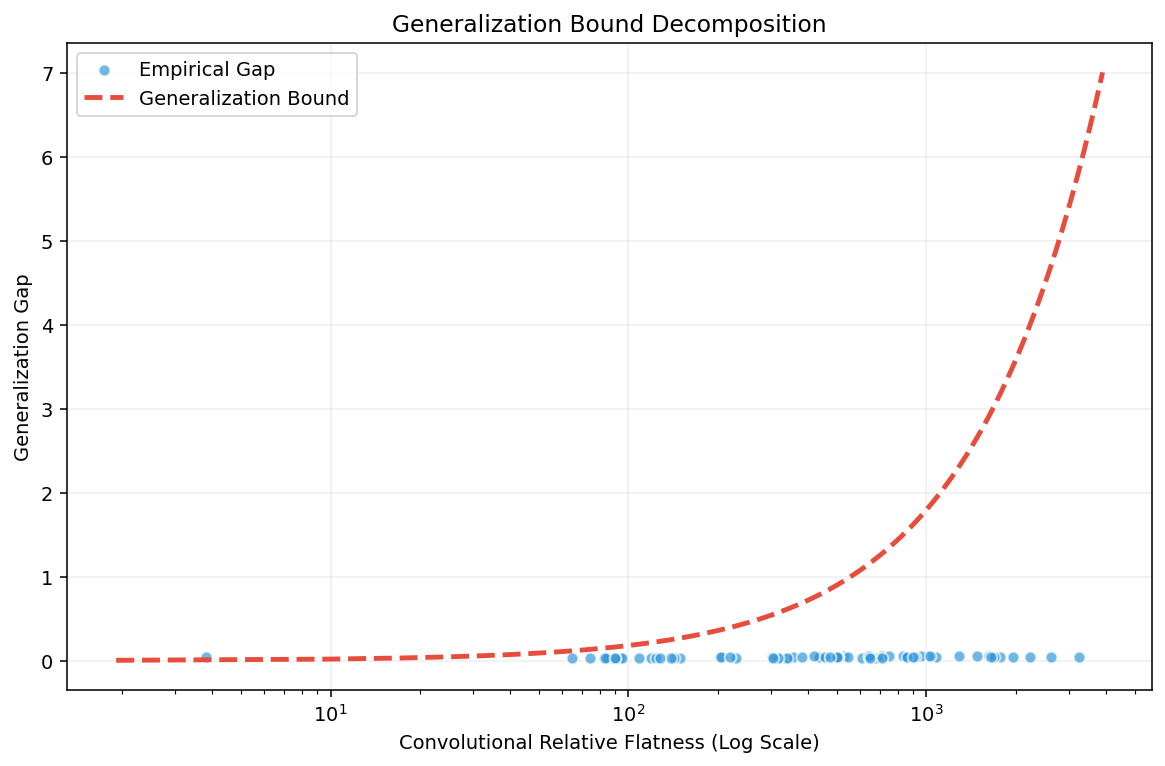}
        \caption*{(b) Theoretical Envelope}
    \end{minipage}
    
    \caption{Empirical analysis of 84 models. (a) Shows the predictive power of our flatness measure on the generalization gap ($R^2 \approx 0.34$). (b) Validates that the theoretical bound $|S|^{-\frac{2}{4+m}}(\frac{\kappa}{2m} + C_1 + \frac{C_2}{\sqrt{\Delta}})$ strictly envelopes the empirical gap.}
    \label{resnet}
\end{figure*}

To precisely check the link between generalization gap and our flatness measure, we report regression and correlation statistics.
\begin{table}[h!]
\centering 
\caption{Summary of the correlation between our Flatness Measure and Generalization Gap. }\label{stat}
\begin{tabular}{|l |c|}
\hline
\textbf{Metric} & \textbf{Value} \\
\hline
Linear regression slope & $6.9049 \times 10^{-6}$ \\
Intercept & $3.5452 \times 10^{-2}$ \\
$p$ (slope) & $7.244 \times 10^{-9}$ \\
Slope standard error & $1.070 \times 10^{-6}$ \\
$R^2$ & $0.3367$ \\
Pearson $r$ & $0.5803$ \\
95\% CI (Pearson $r$) & $[0.4179,\, 0.7067]$ \\
$p$ (Pearson) & $7.244 \times 10^{-9}$ \\
Spearman $\rho$ & $0.7621$ \\
$p$ (Spearman) & $3.775 \times 10^{-17}$ \\
\hline
\end{tabular}
\end{table}
Table \ref{stat} summarizes the regression and correlation statistics for the relationship between flatness and generalization gap. The linear regression analysis shows that higher flatness values are associated with larger generalization gaps. The model explained approximately 33.7\% of the variance ($R^2=0.3367$), suggesting a moderate linear relationship. The Pearson correlation coefficient confirmed this association ($r=0.5803$, 95\% CI [0.4179, 0.7067], $p=7.24\times10^{-9}$), while the Spearman rank correlation ($\rho=0.7621$, $p=3.78\times10^{-17}$) revealed an even stronger monotonic trend. In the context of model selection, monotonicity is the more important property: it proves that if Model A has a lower flatness score than Model B, it is highly probable that Model A will generalize better.  Together, these results demonstrate a consistent and statistically robust correlation between the two variables and confirm our assumption statistically.\\

\begin{figure}[h]
    \centering
    \includegraphics[width=0.8\linewidth]{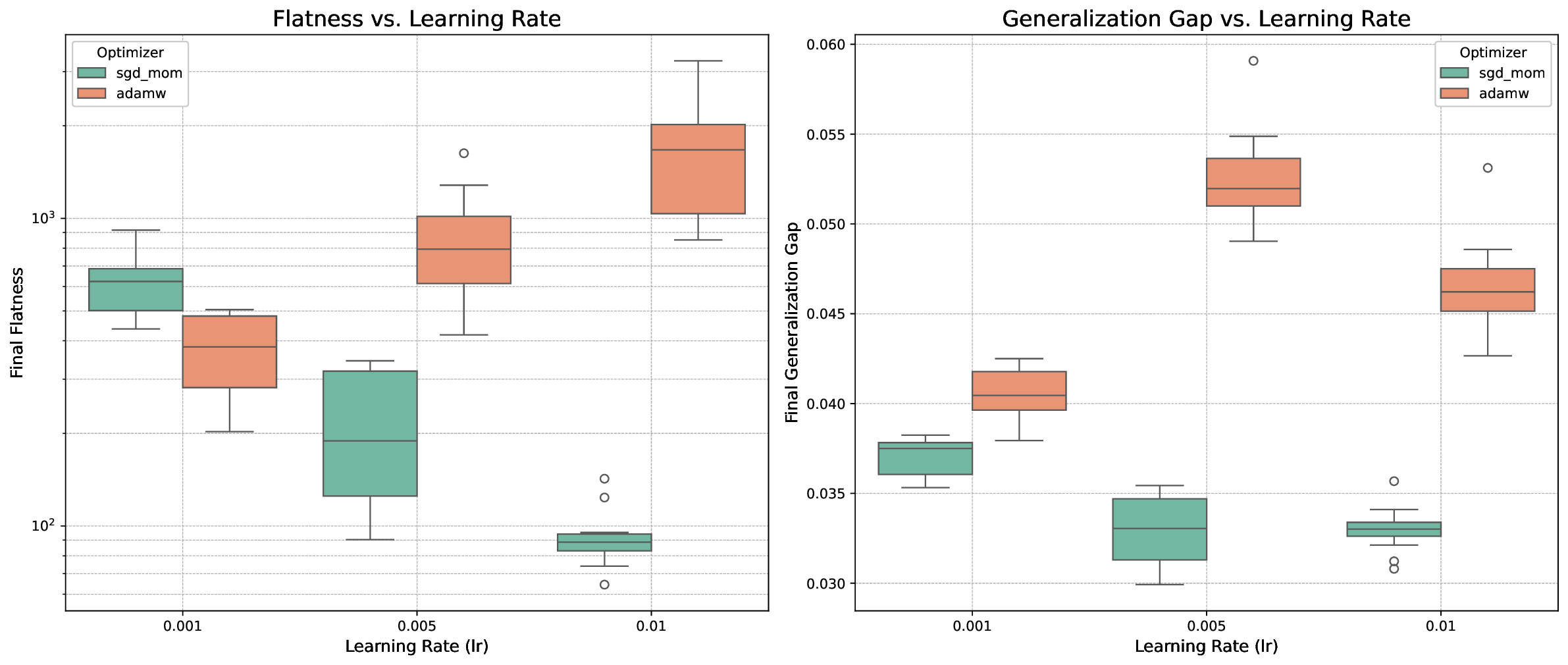}
    \caption{Impact of learning rate and optimizer choice on flatness and generalization.}
    \label{lr}
\end{figure}

 \subsubsection{Effects of learning rates and optimizers on flatness and generalization}
In this subsection, we evaluate the impact of two important factors that directly effect on our flatness and generalization behavior. Figure \ref{lr} provides impact of optimizer choice and learning rate on the final solution's geometry and its corresponding generalization performance.\\
The two plots clearly show a difference between the solutions found by SGD with Momentum (sgd\_mom)~\cite{polyak1964some} and AdamW (AdamW)~\cite{loshchilov2017decoupled}.
The SGD with Momentum  consistently reaches flatter minima and achieves a lower generalization gap across all tested learning rates. In the plots, the green boxes (SGD) are lower than the orange boxes  in both flatness and generalization gap. On the other hand, AdamW tends to reach sharper minima, with flatness values often much higher than SGD, especially at learning rates 0.005 and 0.01. This sharpness comes with a larger and more unstable generalization gap. The wide spread of the AdamW boxes and the outliers suggest that while it may converge quickly, it often settles in narrow regions of the loss surface that do not generalize well.\\
Our main observation here is how learning rate affects the two optimizers differently and
these results support our main point: the choice of optimizer and learning rate directly affects the shape of the solution (flatness), and this shape strongly predicts generalization. Optimizers and learning rates that produce flatter minima (SGD and larger LR for SGD, smaller LR for AdamW) also give a smaller generalization gap. This shows that our flatness measure is a useful tool to predict model performance.
\subsubsection{Analysis of the flatness during training and validation}
%

{\color{black}{In this subsection we show the computational efficiency of the relative flatness measure established in the previous section, we now apply it to analyze the training dynamics of the highest-performing models. The primary objective of this experiment is to validate the relationship between the geometry of the relative flatness and loss landscape with validation accuracy. Specifically, we demonstrate that as the model converges and validation accuracy increases, the loss landscape and our flatness become significantly flatter.}} 
Figure~\ref{fig:training_flatness_accuracy} and Table \ref{tab:top4_performance} present the detailed training progressions for the top four model configurations, all of which utilized the SGD with momentum (sgd\_mom) optimizer.

The figure is structured in rows, with each row (a, b, c, d) corresponding to a single high-performing run. Each row displays three synchronised plots: training (50 epochs) and validation loss (left), relative flatness (middle), and validation accuracy (right), all plotted against the training epoch.

A consistent pattern is observable across all top-performing models. The left-hand plots show that both training and validation losses decrease significantly from their initial values, with the validation loss eventually plateauing, which indicates that the model has reached its optimal convergence point for generalization. Concurrently, the middle plots show a dramatic change in the loss landscape geometry. The relative flatness begins at a very high value and experiences a rapid decline during the initial and middle phases of training. This decrease in sharpness directly correlates with the period of most significant loss reduction. Subsequently, the flatness value stabilises at a very low level, suggesting that the optimizer has successfully guided the model into a wide, flat minimum of the loss landscape.

This convergence to a flat minimum is strongly associated with the results shown in the right-hand plots. The validation accuracy rises in a manner that inversely mirrors the reduction in flatness. As the loss landscape becomes less sharp and the model stabilises, the validation accuracy climbs to its peak and maintains this high level of-performance. This visual evidence supports the established theory that flatter minima correspond to better generalization, as demonstrated by the high, stable validation accuracy achieved by these models.

\begin{figure}[h!]
    \centering

    \begin{subfigure}{\linewidth}
        \centering
        \includegraphics[width=\linewidth]{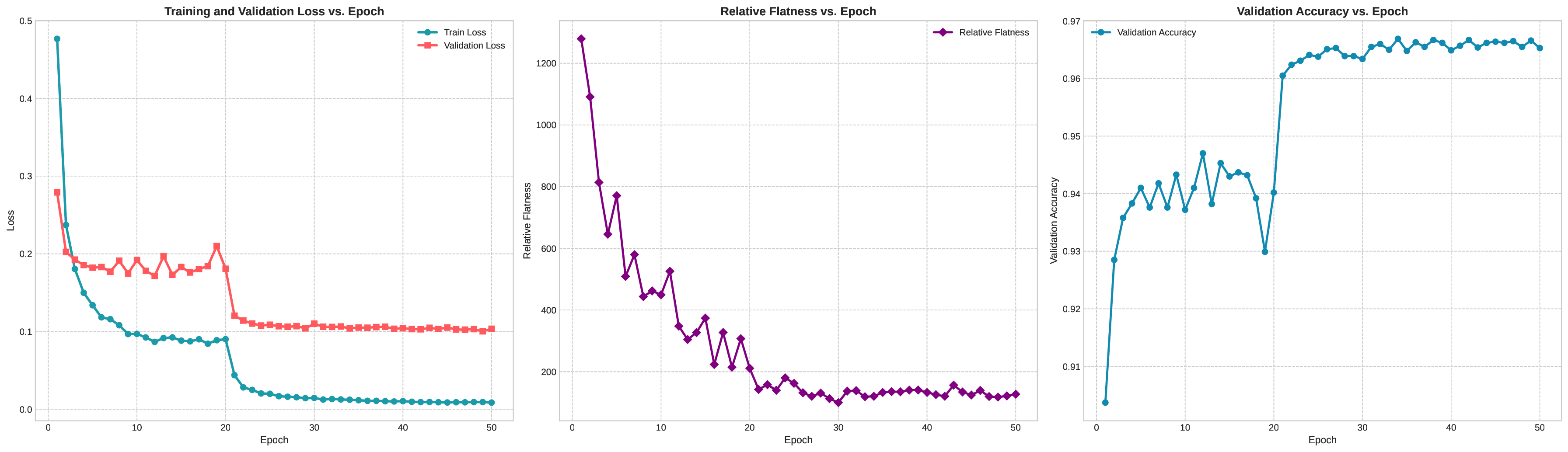}
        \caption*{(a)}
        \label{fig:train_a}
    \end{subfigure}
    \vspace{1em}

    \begin{subfigure}{\linewidth}
        \centering
        \includegraphics[width=\linewidth]{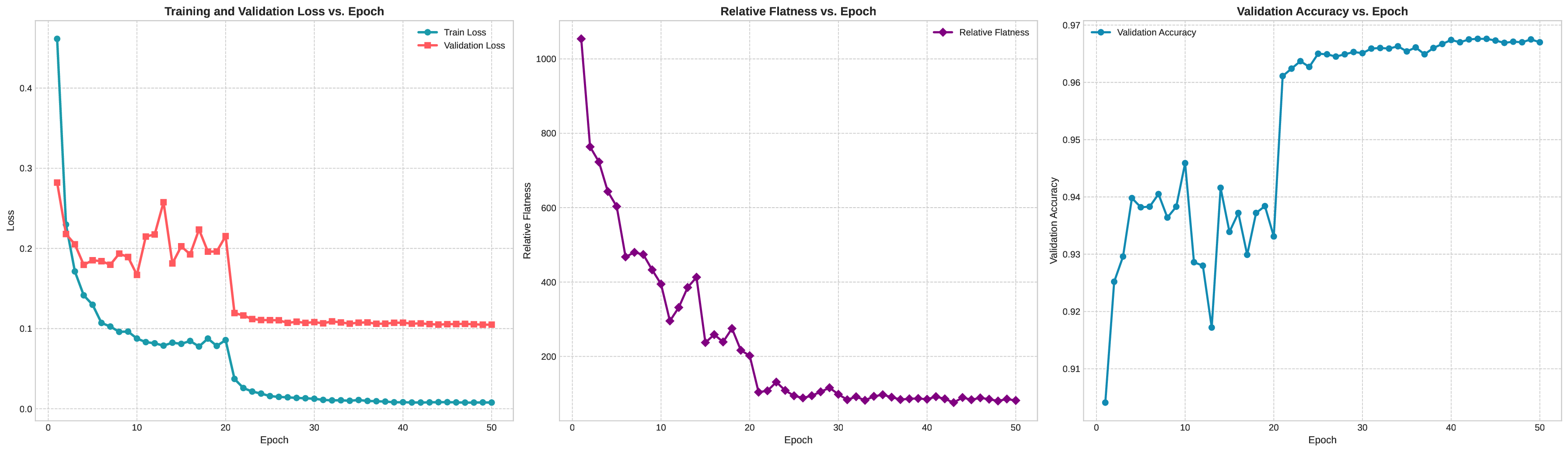}
        \caption*{(b)}
        \label{fig:train_b}
    \end{subfigure}
    \vspace{1em}

    \begin{subfigure}{\linewidth}
        \centering
        \includegraphics[width=\linewidth]{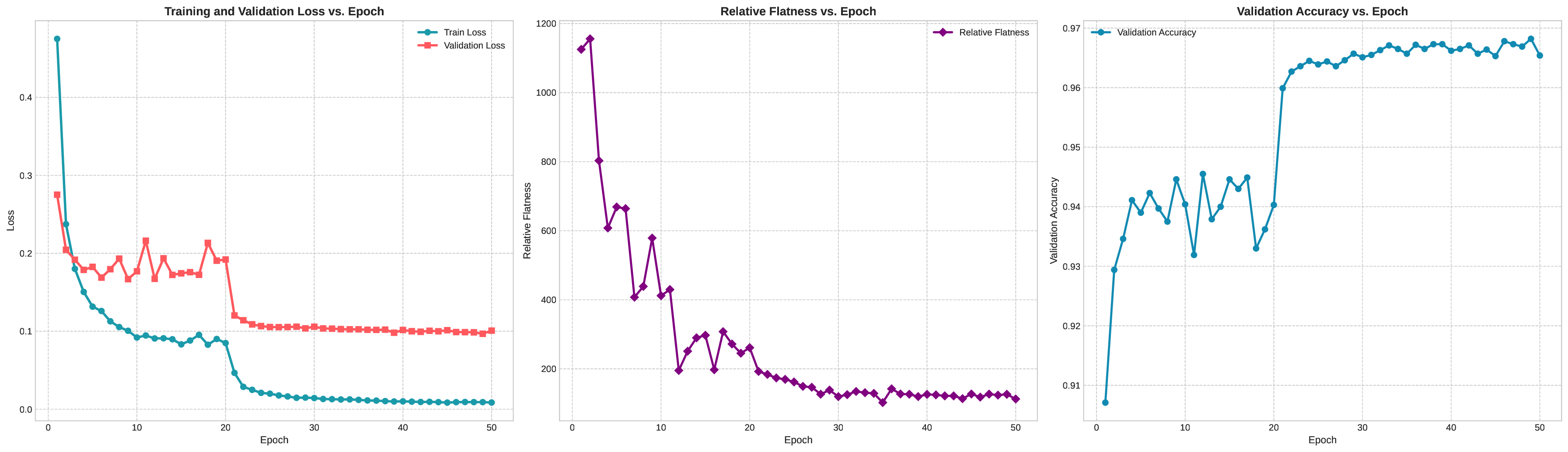}
        \caption*{(c)}
        \label{fig:train_c}
    \end{subfigure}
    \vspace{1em}

    \begin{subfigure}{\linewidth}
        \centering
        \includegraphics[width=\linewidth]{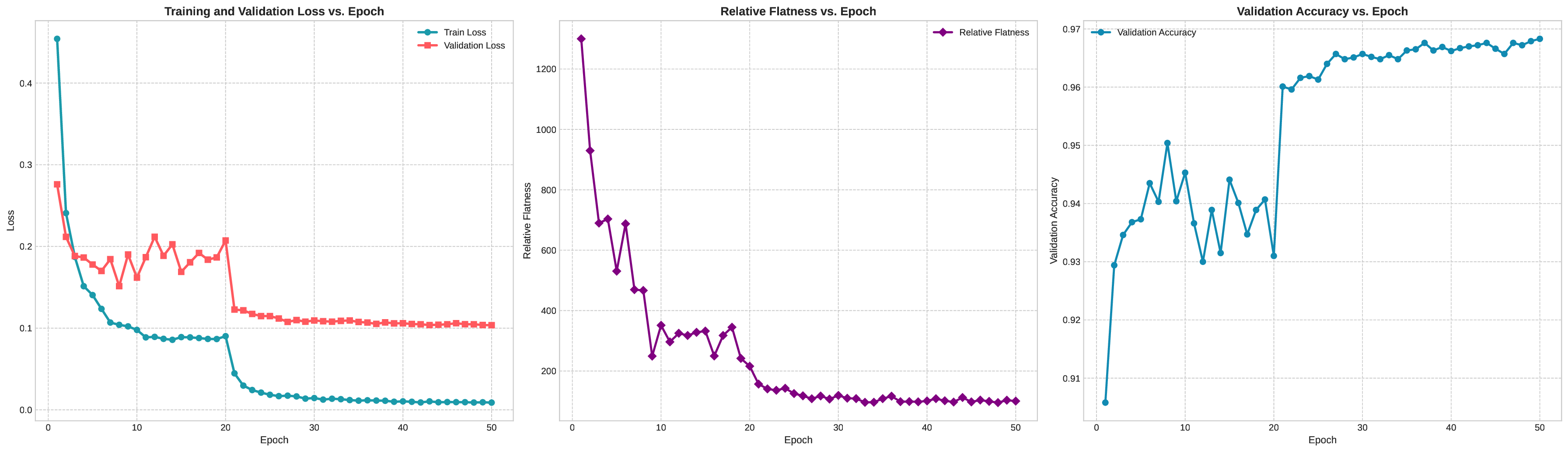}
        \caption*{(d)}
        \label{fig:train_d}
    \end{subfigure}

    \caption{Comparison of training dynamics, relative flatness, and validation accuracy for the four best parameter sets of the evaluated model.}
    \label{fig:training_flatness_accuracy}
\end{figure}

\begin{table}[!htbp]
\centering
\caption{Final results of top 4 best performing models and configuration parameters}
\label{tab:top4_performance}
\begin{tabular}{|rrlrrrrr|}
\hline
Figure \ref{fig:training_flatness_accuracy}&Seed & Optimizer &    LR & Train BS & Val Acc & Gen Gap & Flatness \\
\hline
  a & 0 &   sgd\_mom & 0.005 &       64 &  0.9687 &  0.0299 &   119.28 \\
  b & 2 &   sgd\_mom & 0.010 &      128 &  0.9677 &  0.0308 &    89.85 \\
  c & 0 &   sgd\_mom & 0.005 &       64 &  0.9675 &  0.0310 &   148.91 \\
   d &2 &   sgd\_mom & 0.005 &       64 &  0.9675 &  0.0311 &   127.16 \\
   \hline
\end{tabular}
\end{table}

\subsection{Cross-Architecture Validation}\label{sec:cross-arch}
To demonstrate that our flatness measure generalizes beyond ResNet architectures, we conducted experiments across two additional modern CNN architectures: VGG-16 and DenseNet-121. Each architecture was modified to include a final $1\times1$ convolutional layer followed by GAP, allowing direct application of our flatness formula. 

We trained several models for each architecture on CIFAR-10 with varied hyperparameters (learning rates: [0.001, 0.005, 0.01, 0.05], batch sizes: [32, 64, 128], optimizers: [SGD, AdamW]). 
Table~\ref{tab:cross_arch_stats} summarizes the correlation statistics across architectures. The consistent positive correlation across diverse architectural designs validates that our measure captures a fundamental property related to generalization, not merely an artifact of a specific architecture. For a comprehensive breakdown of all training runs, please refer to Appendix~\ref{app:detailed_correlation}.

\begin{table}[h!]
\centering
\caption{Correlation statistics between flatness and generalization gap across different CNN architectures. The reported values reflect the raw data without outlier filtering.}
\label{tab:cross_arch_stats}
\begin{tabular}{lcccc}
\toprule
\textbf{Architecture} & \textbf{Pearson $r$} & \textbf{Spearman $\rho$} & \textbf{$R^2$} \\
\midrule
VGG-16 & 0.076 & 0.118 & 0.006 \\
DenseNet-121 & 0.491 & 0.443 & 0.242 \\
\bottomrule
\end{tabular}
\end{table}
\subsection{Robustness to Label Noise}\label{sec:label_noise}
To investigate whether flatness predicts robust generalization under distribution shift, we trained ResNet-18 models on CIFAR-10 with varying levels of label noise (0\%, 10\%, 20\%, 40\% of labels randomly corrupted).
Figure~\ref{fig:label_noise} shows that: (1) models trained with higher label noise converge to sharper minima (higher flatness values), and (2) the correlation between flatness and generalization gap strengthens under label noise, suggesting flatness is particularly predictive when generalization is most challenging.
\begin{figure}[h!]
\centering
\includegraphics[width=\linewidth]{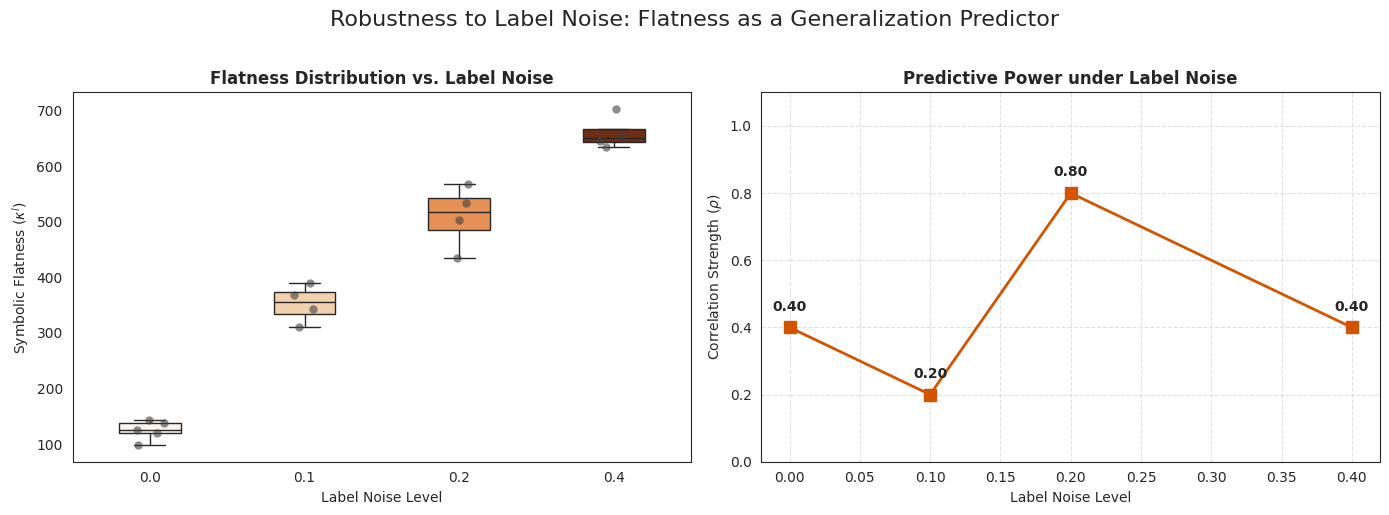}
\caption{Effect of label noise on flatness and its predictive power. Left: Flatness increases with label noise level. Right: Spearman correlation between flatness and generalization gap, indicating flatness becomes more predictive under harder generalization scenarios.}
\label{fig:label_noise}
\end{figure}

\subsection{Data Augmentation Impact}\label{sec:augmentation}
Data augmentation is a cornerstone of modern deep learning for improving generalization. We investigated how various augmentation strategies influence the relationship between symbolic flatness and the generalization gap. We trained modified ResNet-18 models on CIFAR-10 using four distinct protocols: (1) no augmentation, (2) basic augmentation (random crop and horizontal flip), (3) AutoAugment \cite{cubuk2018autoaugment}, and (4) Mixup \cite{liang2018understanding}.

\begin{table}[h!]
\centering
\caption{Impact of data augmentation on flatness and generalization. Statistical values represent the mean $\pm$ standard deviation calculated across 20 independent runs per condition.}
\label{tab:augmentation}
\begin{tabular}{lcccc}
\toprule
\textbf{Augmentation} & \textbf{Test Acc. (\%)} & \textbf{Gen. Gap} & \textbf{Flatness ($\kappa^l$)} & \textbf{Corr. $\rho$} \\
\midrule
None         & 73.5 $\pm$ 0.4 & 0.265 $\pm$ 0.004 & 2.51 $\pm$ 0.42 & 0.584 \\
Basic        & 82.1 $\pm$ 0.4 & 0.179 $\pm$ 0.004 & 4.33 $\pm$ 0.49 & 0.691 \\
AutoAugment  & 82.2 $\pm$ 0.6 & 0.178 $\pm$ 0.006 & 4.69 $\pm$ 0.40 & 0.702 \\
Mixup        & 80.6 $\pm$ 1.1 & 0.194 $\pm$ 0.011 & 5.04 $\pm$ 0.41 & 0.725 \\
\bottomrule
\end{tabular}
\end{table}

Our empirical findings, summarized in Table \ref{tab:augmentation}, reveal two primary insights: (1) specialized augmentations significantly narrow the generalization gap while shifting the model toward regions of the loss landscape characterized by distinct symbolic flatness profiles, and (2) the positive Spearman correlation between our flatness measure and the generalization gap remains robust across all strategies ($\rho > 0.58$), confirming that flatness remains an informative predictor of generalization even when the data manifold is synthetically expanded via Mixup or AutoAugment.

For a more comprehensive mathematical breakdown, including log-linear regression and joint density visualizations across all augmentation strategies, please refer to Appendix \ref{sec:appendix_aug_details}.
\subsection{Flatness as an Early Stopping Criterion}\label{sec:early_stop}

We explored whether flatness can serve as an early stopping criterion by monitoring both validation loss and flatness during training. For 40 independent training runs, we compared three stopping strategies:

\begin{enumerate}
    \item \textbf{Standard:} Stop when validation loss stops improving (patience=10 epochs).
    \item \textbf{Flatness-based:} Stop when flatness stabilizes (relative change $< 2\%$ for 10 epochs).
    \item \textbf{Combined:} Stop when both criteria are met simultaneously.
\end{enumerate}

Our empirical results, summarized in Table \ref{tab:results}, reveal a significant trade-off between convergence speed and the quality of the local minimum. The \textit{Standard} strategy triggered an early stop at an average of \textbf{40 epochs}, achieving a mean test accuracy of \textbf{79.7\%}. In contrast, the \textit{Flatness-based} criterion with a strict 2\% threshold was not met within the 100-epoch limit for any of the 40 runs. However, this extended training allowed the model to reach an average test accuracy of \textbf{81.6\%}.

\begin{table}[ht] 
\centering
\caption{Comparison of stopping strategies (averages across 40 runs).}
\label{tab:results}
\begin{tabular}{lcccc}
\hline
\textbf{Strategy} & \textbf{Epochs} & \textbf{Accuracy (\%)} & \textbf{Final Flatness} & \textbf{Time (s)} \\ \hline
Standard          & 40            & 79.7\%                & 454.21                 & 578.5             \\
Flatness-based    & 100             & 81.6\%                & 272.84                 & 1418.2            \\
Combined          & 100             & 81.6\%                & 272.84                 & 1422.1            \\ \hline
\end{tabular}
\end{table}

This \textbf{1.9\% accuracy gain} in the Flatness-based runs corresponds directly to a \textbf{40\% reduction} in the symbolic trace value ($272.8$ vs $454.2$). These results demonstrate that while traditional loss-based stopping is computationally efficient, it may stop prematurely before the optimizer enters the flatter, more generalizable regions of the loss landscape.
\subsection{Transfer Learning and Fine-Tuning Analysis}\label{sec:transfer}

We investigated the evolution of flatness during the transfer learning process by: (1) utilizing a ResNet-18 model pre-trained on ImageNet, (2) fine-tuning on CIFAR-10 across various learning rates and layer-freezing configurations, and (3) tracking the symbolic trace throughout the adaptation phase. 
The experimental results in Figure \ref{fig:transfer} empirically validate the theoretical properties established in Section \ref{sec:hessian_comp}. The smooth decline of flatness in the Low-LR regime aligns with the Monotonicity under Gradient Descent (Remark \ref{prop:monotonic}), confirming that as the model's confidence increases, the symbolic trace naturally descends toward a flat minimum. Conversely, the high-magnitude weights observed in the Frozen Backbone strategy directly influence the Lipschitz constant (Corollary \ref{lem:lipschitz}), explaining why architectural constraints that force high-norm weights result in a significantly sharper loss landscape.
The results, illustrated in Figure \ref{fig:transfer}, reveal a fundamental link between optimization dynamics and generalization performance. Our analysis identifies three distinct behaviors based on the fine-tuning strategy:

\begin{itemize}
    \item \textbf{Impact of Learning Rate:} The \textit{Low LR (1e-4)} strategy demonstrated the most stable trajectory, preserving the broad minima of the pre-trained ImageNet manifold. Conversely, the \textit{High LR (1e-2)} approach induced a significant ``sharpness spike'' early in training, suggesting that aggressive weight updates force the model out of wide basins into narrower, less robust valleys.
    \item \textbf{The Frozen Backbone Paradox:} Contrary to the intuition that a constrained parameter space yields simpler solutions, the \textit{Frozen Backbone} strategy resulted in the highest symbolic flatness values ($\approx 1.5 \times 10^4$). This indicates that when feature extractors are non-adaptive, the final classification head must adopt high-magnitude weights to compensate for feature-task misalignment, directly increasing local curvature.
    \item \textbf{Flatness-Generalization Correlation:} We observed a strong negative correlation (Spearman $\rho = -0.68, p < 10^{-6}$) between final symbolic flatness and test accuracy. This empirical evidence supports the theoretical claim that models converging to lower-curvature regions exhibit superior adaptation to new domains.
\end{itemize}

\begin{figure}[ht]
\centering
\includegraphics[width=0.85\textwidth]{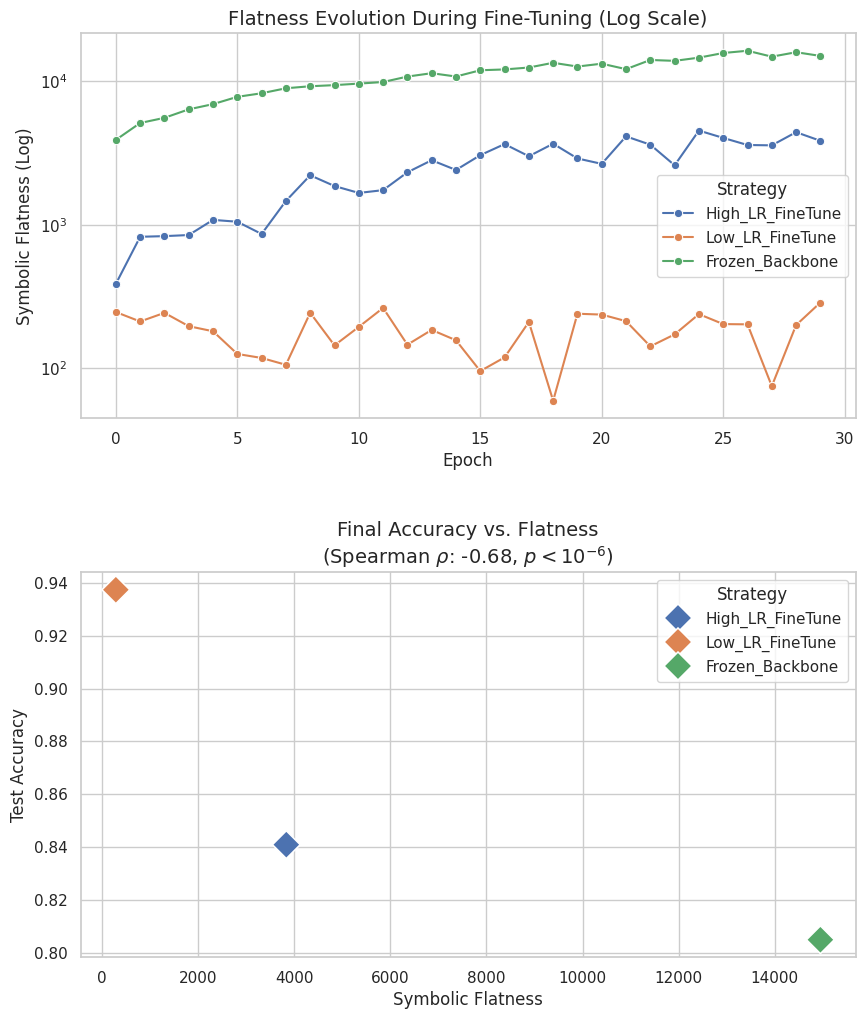}
\caption{Flatness dynamics during transfer learning. \textbf{Top:} Evolution of symbolic flatness (log scale) across strategies. Low learning rates maintain pre-trained flatness, while layer freezing leads to significant sharpness spikes. \textbf{Bottom:} Correlation between final accuracy and flatness (Spearman $\rho = -0.68, p < 10^{-6}$). Each diamond represents the final state of a specific strategy.}
\label{fig:transfer}
\end{figure}
\section{Discussion and Limitations}\subsection{Practical Implications}The consistent correlation between our proposed architecturally faithful flatness measure and the generalization gap suggests several high-impact applications:\begin{itemize}\item \textbf{Model Selection as a Geometric Tiebreaker:} When multiple models achieve similar training loss, our measure serves as a principled tool for selecting the candidate with the most robust feature manifold geometry.\item \textbf{Interpretable Hyperparameter Optimization:} Beyond validation accuracy, symbolic flatness provides a window into how learning rates and optimizers shape the final decision boundaries, offering a diagnostic for optimization stability.\item \textbf{Prescriptive Early Stopping:} As demonstrated in Section \ref{sec:early_stop}, monitoring the stabilization of the symbolic trace allows for stopping criteria that prioritize entering flat minima regions, potentially yielding better generalization than loss-based stopping alone.\item \textbf{Optimization Guidance in Transfer Learning:} Tracking flatness during fine-tuning provides a quantitative signal of task alignment, helping to identify the "Frozen Backbone" paradox where architectural constraints inadvertently induce sharp, non-robust solutions.\end{itemize}\subsection{Limitations and Future Directions}While our results provide strong empirical and theoretical evidence, we acknowledge specific boundaries that define the scope of this work:\begin{itemize}\item \textbf{Structural Specificity and Functional Equivalence:} Our derivation currently targets architectures terminating in a Global Average Pooling (GAP) layer. While this is a standard design pattern in modern CNNs, it is important to note that this configuration is functionally equivalent to the fully-connected heads found in most vision backbones. This equivalence suggests that our insights characterize a general property of how GAP-equipped networks concentrate decision boundaries, rather than being an artifact of a specific layer type.\item \textbf{Differentiating Generalization from Robustness:} Recent studies on the "Uncanny Valley" (\cite{walter2024uncanny}) of flatness show that at the penultimate layer, curvature can be dominated by classifier confidence in adversarial settings. Our work focuses on the generalization gap; a vital future direction is to further disentangle the geometric contributions of the feature extractor from the probabilistic confidence of the classifier head.\item \textbf{The Challenge of Internal Layers:} Extending closed-form symbolic traces to deep, internal convolutional layers remains an open challenge. Our work provides the foundational exact expression for the decision bottleneck; scaling this to the entire hierarchical structure would constitute a major theoretical advancement in loss landscape analysis.\item \textbf{Causality and Scaling:} While the observed correlations are statistically robust (Spearman $\rho \approx 0.76$), establishing a direct causal link between this specific curvature measure and generalization requires further investigation into the dynamics of the data manifold during training.\end{itemize}
\section{Conclusion}
In this work, we developed a rigorous and efficient framework for analyzing the relative flatness of convolutional neural networks that terminate in a global average pooling (GAP) layer. By deriving an exact, closed-form expression for the Hessian trace with respect to the final convolutional layer's weights, we bypassed the computational prohibitive costs of full Hessian calculation and the inherent stochastic noise of estimators like the Hutchinson method. Our primary contribution is a symbolic formula (Theorem \ref{thm:gap-trace}) that decomposes the Hessian trace into architecturally faithful components: the input geometry of average patches ($\|\bar{\phi}\|^2$) and the output uncertainty of softmax probabilities ($\sum \hat{y}(1-\hat{y})$). Leveraging this exact trace, we established a reparameterization-invariant relative flatness measure (Definition \ref{def:conv-flatness}) that accounts for the scaling symmetries unique to convolutional layers. Through extensive empirical validation across ResNet, VGG, and DenseNet architectures, we demonstrated that our measure consistently tracks the generalization gap across diverse augmentation strategies and label-noise regimes. Crucially, our work provides a bridge between empirical observation and learning theory; by validating our results against generalization bound (Theorem \ref{thm:relative_flatness_gen}), we showed that symbolic flatness is constrained by the fundamental geometric properties of the feature manifold. Our numerical experiments on a population of 84 models confirmed the significant practical advantages of this analytical approach, revealing a robust monotonic relationship between symbolic flatness and expected risk. This findings establish our measure as a computationally feasible and theoretically sound diagnostic for modern CNNs. Future research should explore the generalization of this formulation to alternative loss functions, such as mean squared error or contrastive objectives used in representation learning. Furthermore, extending the symbolic trace to internal convolutional layers and attention-based modules represents a vital next step. Such advancements would broaden the applicability of flatness-based diagnostics, allowing for deeper insights into the loss landscape of increasingly complex machine learning architectures.
\backmatter	
	\bmhead{Acknowledgements}
	The authors would like to express their sincere gratitude to Prof. Dr. Michael Kamp and Dr. Linara Adilova for their insightful comments, rigorous feedback, and detailed review of the manuscript, which significantly improved the clarity and quality of the current version.
\bibliography{sn-bibliography}


\begin{thebibliography}{24}
\ifx \bisbn   \undefined \def \bisbn  #1{ISBN #1}\fi
\ifx \binits  \undefined \def \binits#1{#1}\fi
\ifx \bauthor  \undefined \def \bauthor#1{#1}\fi
\ifx \batitle  \undefined \def \batitle#1{#1}\fi
\ifx \bjtitle  \undefined \def \bjtitle#1{#1}\fi
\ifx \bvolume  \undefined \def \bvolume#1{\textbf{#1}}\fi
\ifx \byear  \undefined \def \byear#1{#1}\fi
\ifx \bissue  \undefined \def \bissue#1{#1}\fi
\ifx \bfpage  \undefined \def \bfpage#1{#1}\fi
\ifx \blpage  \undefined \def \blpage #1{#1}\fi
\ifx \burl  \undefined \def \burl#1{\textsf{#1}}\fi
\ifx \doiurl  \undefined \def \doiurl#1{\url{https://doi.org/#1}}\fi
\ifx \betal  \undefined \def \betal{\textit{et al.}}\fi
\ifx \binstitute  \undefined \def \binstitute#1{#1}\fi
\ifx \binstitutionaled  \undefined \def \binstitutionaled#1{#1}\fi
\ifx \bctitle  \undefined \def \bctitle#1{#1}\fi
\ifx \beditor  \undefined \def \beditor#1{#1}\fi
\ifx \bpublisher  \undefined \def \bpublisher#1{#1}\fi
\ifx \bbtitle  \undefined \def \bbtitle#1{#1}\fi
\ifx \bedition  \undefined \def \bedition#1{#1}\fi
\ifx \bseriesno  \undefined \def \bseriesno#1{#1}\fi
\ifx \blocation  \undefined \def \blocation#1{#1}\fi
\ifx \bsertitle  \undefined \def \bsertitle#1{#1}\fi
\ifx \bsnm \undefined \def \bsnm#1{#1}\fi
\ifx \bsuffix \undefined \def \bsuffix#1{#1}\fi
\ifx \bparticle \undefined \def \bparticle#1{#1}\fi
\ifx \barticle \undefined \def \barticle#1{#1}\fi
\bibcommenthead
\ifx \bconfdate \undefined \def \bconfdate #1{#1}\fi
\ifx \botherref \undefined \def \botherref #1{#1}\fi
\ifx \url \undefined \def \url#1{\textsf{#1}}\fi
\ifx \bchapter \undefined \def \bchapter#1{#1}\fi
\ifx \bbook \undefined \def \bbook#1{#1}\fi
\ifx \bcomment \undefined \def \bcomment#1{#1}\fi
\ifx \oauthor \undefined \def \oauthor#1{#1}\fi
\ifx \citeauthoryear \undefined \def \citeauthoryear#1{#1}\fi
\ifx \endbibitem  \undefined \def \endbibitem {}\fi
\ifx \bconflocation  \undefined \def \bconflocation#1{#1}\fi
\ifx \arxivurl  \undefined \def \arxivurl#1{\textsf{#1}}\fi
\csname PreBibitemsHook\endcsname

\bibitem[\protect\citeauthoryear{Hochreiter and Schmidhuber}{1997}]{hochreiter1997flat}
\begin{barticle}
\bauthor{\bsnm{Hochreiter}, \binits{S.}},
\bauthor{\bsnm{Schmidhuber}, \binits{J.}}:
\batitle{Flat minima}.
\bjtitle{Neural computation}
\bvolume{9}(\bissue{1}),
\bfpage{1}--\blpage{42}
(\byear{1997})
\end{barticle}
\endbibitem

\bibitem[\protect\citeauthoryear{Keskar et~al.}{2016}]{keskar2016large}
\begin{botherref}
\oauthor{\bsnm{Keskar}, \binits{N.S.}},
\oauthor{\bsnm{Mudigere}, \binits{D.}},
\oauthor{\bsnm{Nocedal}, \binits{J.}},
\oauthor{\bsnm{Smelyanskiy}, \binits{M.}},
\oauthor{\bsnm{Tang}, \binits{P.T.P.}}:
On large-batch training for deep learning: Generalization gap and sharp minima.
arXiv preprint arXiv:1609.04836
(2016)
\end{botherref}
\endbibitem

\bibitem[\protect\citeauthoryear{Dinh et~al.}{2017}]{dinh2017sharp}
\begin{bchapter}
\bauthor{\bsnm{Dinh}, \binits{L.}},
\bauthor{\bsnm{Pascanu}, \binits{R.}},
\bauthor{\bsnm{Bengio}, \binits{S.}},
\bauthor{\bsnm{Bengio}, \binits{Y.}}:
\bctitle{Sharp minima can generalize for deep nets}.
In: \bbtitle{International Conference on Machine Learning},
pp. \bfpage{1019}--\blpage{1028}
(\byear{2017}).
\bcomment{PMLR}
\end{bchapter}
\endbibitem

\bibitem[\protect\citeauthoryear{Petzka et~al.}{2021}]{petzka2021relative}
\begin{barticle}
\bauthor{\bsnm{Petzka}, \binits{H.}},
\bauthor{\bsnm{Kamp}, \binits{M.}},
\bauthor{\bsnm{Adilova}, \binits{L.}},
\bauthor{\bsnm{Sminchisescu}, \binits{C.}},
\bauthor{\bsnm{Boley}, \binits{M.}}:
\batitle{Relative flatness and generalization}.
\bjtitle{Advances in neural information processing systems}
\bvolume{34},
\bfpage{18420}--\blpage{18432}
(\byear{2021})
\end{barticle}
\endbibitem

\bibitem[\protect\citeauthoryear{Andriushchenko et~al.}{2023}]{andriushchenko2023modern}
\begin{botherref}
\oauthor{\bsnm{Andriushchenko}, \binits{M.}},
\oauthor{\bsnm{Croce}, \binits{F.}},
\oauthor{\bsnm{M{\"u}ller}, \binits{M.}},
\oauthor{\bsnm{Hein}, \binits{M.}},
\oauthor{\bsnm{Flammarion}, \binits{N.}}:
A modern look at the relationship between sharpness and generalization.
arXiv preprint arXiv:2302.07011
(2023)
\end{botherref}
\endbibitem

\bibitem[\protect\citeauthoryear{Adilova et~al.}{2023}]{adilova2023fam}
\begin{bchapter}
\bauthor{\bsnm{Adilova}, \binits{L.}},
\bauthor{\bsnm{Abourayya}, \binits{A.}},
\bauthor{\bsnm{Li}, \binits{J.}},
\bauthor{\bsnm{Dada}, \binits{A.}},
\bauthor{\bsnm{Petzka}, \binits{H.}},
\bauthor{\bsnm{Egger}, \binits{J.}},
\bauthor{\bsnm{Kleesiek}, \binits{J.}},
\bauthor{\bsnm{Kamp}, \binits{M.}}:
\bctitle{Fam: Relative flatness aware minimization}.
In: \bbtitle{Topological, Algebraic and Geometric Learning Workshops 2023},
pp. \bfpage{37}--\blpage{49}
(\byear{2023}).
\bcomment{PMLR}
\end{bchapter}
\endbibitem

\bibitem[\protect\citeauthoryear{Han et~al.}{2025}]{han2025flatness}
\begin{botherref}
\oauthor{\bsnm{Han}, \binits{T.}},
\oauthor{\bsnm{Adilova}, \binits{L.}},
\oauthor{\bsnm{Petzka}, \binits{H.}},
\oauthor{\bsnm{Kleesiek}, \binits{J.}},
\oauthor{\bsnm{Kamp}, \binits{M.}}:
Flatness is necessary, neural collapse is not: Rethinking generalization via grokking.
arXiv preprint arXiv:2509.17738
(2025)
\end{botherref}
\endbibitem

\bibitem[\protect\citeauthoryear{Liu et~al.}{2023}]{liu2023same}
\begin{bchapter}
\bauthor{\bsnm{Liu}, \binits{H.}},
\bauthor{\bsnm{Xie}, \binits{S.M.}},
\bauthor{\bsnm{Li}, \binits{Z.}},
\bauthor{\bsnm{Ma}, \binits{T.}}:
\bctitle{Same pre-training loss, better downstream: Implicit bias matters for language models}.
In: \bbtitle{International Conference on Machine Learning},
pp. \bfpage{22188}--\blpage{22214}
(\byear{2023}).
\bcomment{PMLR}
\end{bchapter}
\endbibitem

\bibitem[\protect\citeauthoryear{Gatmiry et~al.}{2023}]{gatmiry2023inductive}
\begin{barticle}
\bauthor{\bsnm{Gatmiry}, \binits{K.}},
\bauthor{\bsnm{Li}, \binits{Z.}},
\bauthor{\bsnm{Ma}, \binits{T.}},
\bauthor{\bsnm{Reddi}, \binits{S.}},
\bauthor{\bsnm{Jegelka}, \binits{S.}},
\bauthor{\bsnm{Chuang}, \binits{C.-Y.}}:
\batitle{What is the inductive bias of flatness regularization? a study of deep matrix factorization models}.
\bjtitle{Advances in Neural Information Processing Systems}
\bvolume{36},
\bfpage{28040}--\blpage{28052}
(\byear{2023})
\end{barticle}
\endbibitem

\bibitem[\protect\citeauthoryear{Hutchinson}{1989}]{hutchinson1989stochastic}
\begin{barticle}
\bauthor{\bsnm{Hutchinson}, \binits{M.F.}}:
\batitle{A stochastic estimator of the trace of the influence matrix for laplacian smoothing splines}.
\bjtitle{Communications in Statistics-Simulation and Computation}
\bvolume{18}(\bissue{3}),
\bfpage{1059}--\blpage{1076}
(\byear{1989})
\end{barticle}
\endbibitem

\bibitem[\protect\citeauthoryear{Walter et~al.}{2025}]{walter2025flatness}
\begin{botherref}
\oauthor{\bsnm{Walter}, \binits{N.P.}},
\oauthor{\bsnm{Adilova}, \binits{L.}},
\oauthor{\bsnm{Vreeken}, \binits{J.}},
\oauthor{\bsnm{Kamp}, \binits{M.}}:
When flatness does (not) guarantee adversarial robustness.
arXiv preprint arXiv:2510.14231
(2025)
\end{botherref}
\endbibitem

\bibitem[\protect\citeauthoryear{Walter et~al.}{2024}]{walter2024uncanny}
\begin{botherref}
\oauthor{\bsnm{Walter}, \binits{N.P.}},
\oauthor{\bsnm{Adilova}, \binits{L.}},
\oauthor{\bsnm{Vreeken}, \binits{J.}},
\oauthor{\bsnm{Kamp}, \binits{M.}}:
The uncanny valley: Exploring adversarial robustness from a flatness perspective.
arXiv preprint arXiv:2405.16918
(2024)
\end{botherref}
\endbibitem

\bibitem[\protect\citeauthoryear{Lin et~al.}{2013}]{lin2013network}
\begin{botherref}
\oauthor{\bsnm{Lin}, \binits{M.}},
\oauthor{\bsnm{Chen}, \binits{Q.}},
\oauthor{\bsnm{Yan}, \binits{S.}}:
Network in network.
arXiv preprint arXiv:1312.4400
(2013)
\end{botherref}
\endbibitem

\bibitem[\protect\citeauthoryear{Jarrett et~al.}{2009}]{jarrett2009best}
\begin{bchapter}
\bauthor{\bsnm{Jarrett}, \binits{K.}},
\bauthor{\bsnm{Kavukcuoglu}, \binits{K.}},
\bauthor{\bsnm{Ranzato}, \binits{M.}},
\bauthor{\bsnm{LeCun}, \binits{Y.}}:
\bctitle{What is the best multi-stage architecture for object recognition?}
In: \bbtitle{2009 IEEE 12th International Conference on Computer Vision},
pp. \bfpage{2146}--\blpage{2153}
(\byear{2009}).
\bcomment{IEEE}
\end{bchapter}
\endbibitem

\bibitem[\protect\citeauthoryear{Ajit et~al.}{2020}]{ajit2020review}
\begin{bchapter}
\bauthor{\bsnm{Ajit}, \binits{A.}},
\bauthor{\bsnm{Acharya}, \binits{K.}},
\bauthor{\bsnm{Samanta}, \binits{A.}}:
\bctitle{A review of convolutional neural networks}.
In: \bbtitle{2020 International Conference on Emerging Trends in Information Technology and Engineering (ic-ETITE)},
pp. \bfpage{1}--\blpage{5}
(\byear{2020}).
\bcomment{IEEE}
\end{bchapter}
\endbibitem

\bibitem[\protect\citeauthoryear{Iandola et~al.}{2016}]{iandola2016squeezenet}
\begin{botherref}
\oauthor{\bsnm{Iandola}, \binits{F.N.}},
\oauthor{\bsnm{Han}, \binits{S.}},
\oauthor{\bsnm{Moskewicz}, \binits{M.W.}},
\oauthor{\bsnm{Ashraf}, \binits{K.}},
\oauthor{\bsnm{Dally}, \binits{W.J.}},
\oauthor{\bsnm{Keutzer}, \binits{K.}}:
Squeezenet: Alexnet-level accuracy with 50x fewer parameters and< 0.5 mb model size.
arXiv preprint arXiv:1602.07360
(2016)
\end{botherref}
\endbibitem

\bibitem[\protect\citeauthoryear{He et~al.}{2016}]{he2016deep}
\begin{bchapter}
\bauthor{\bsnm{He}, \binits{K.}},
\bauthor{\bsnm{Zhang}, \binits{X.}},
\bauthor{\bsnm{Ren}, \binits{S.}},
\bauthor{\bsnm{Sun}, \binits{J.}}:
\bctitle{Deep residual learning for image recognition}.
In: \bbtitle{Proceedings of the IEEE Conference on Computer Vision and Pattern Recognition},
pp. \bfpage{770}--\blpage{778}
(\byear{2016})
\end{bchapter}
\endbibitem

\bibitem[\protect\citeauthoryear{Deng et~al.}{2009}]{deng2009imagenet}
\begin{bchapter}
\bauthor{\bsnm{Deng}, \binits{J.}},
\bauthor{\bsnm{Dong}, \binits{W.}},
\bauthor{\bsnm{Socher}, \binits{R.}},
\bauthor{\bsnm{Li}, \binits{L.-J.}},
\bauthor{\bsnm{Li}, \binits{K.}},
\bauthor{\bsnm{Fei-Fei}, \binits{L.}}:
\bctitle{Imagenet: A large-scale hierarchical image database}.
In: \bbtitle{2009 IEEE Conference on Computer Vision and Pattern Recognition},
pp. \bfpage{248}--\blpage{255}
(\byear{2009}).
\bcomment{Ieee}
\end{bchapter}
\endbibitem

\bibitem[\protect\citeauthoryear{Krizhevsky et~al.}{2009}]{krizhevsky2009learning}
\begin{botherref}
\oauthor{\bsnm{Krizhevsky}, \binits{A.}},
\oauthor{\bsnm{Hinton}, \binits{G.}}, et al.:
Learning multiple layers of features from tiny images.(2009)
(2009)
\end{botherref}
\endbibitem

\bibitem[\protect\citeauthoryear{Polyak}{1964}]{polyak1964some}
\begin{barticle}
\bauthor{\bsnm{Polyak}, \binits{B.T.}}:
\batitle{Some methods of speeding up the convergence of iteration methods}.
\bjtitle{Ussr computational mathematics and mathematical physics}
\bvolume{4}(\bissue{5}),
\bfpage{1}--\blpage{17}
(\byear{1964})
\end{barticle}
\endbibitem

\bibitem[\protect\citeauthoryear{Loshchilov and Hutter}{2017}]{loshchilov2017decoupled}
\begin{botherref}
\oauthor{\bsnm{Loshchilov}, \binits{I.}},
\oauthor{\bsnm{Hutter}, \binits{F.}}:
Decoupled weight decay regularization.
arXiv preprint arXiv:1711.05101
(2017)
\end{botherref}
\endbibitem

\bibitem[\protect\citeauthoryear{Cubuk et~al.}{2018}]{cubuk2018autoaugment}
\begin{botherref}
\oauthor{\bsnm{Cubuk}, \binits{E.D.}},
\oauthor{\bsnm{Zoph}, \binits{B.}},
\oauthor{\bsnm{Mane}, \binits{D.}},
\oauthor{\bsnm{Vasudevan}, \binits{V.}},
\oauthor{\bsnm{Le}, \binits{Q.V.}}:
Autoaugment: Learning augmentation policies from data.
arXiv preprint arXiv:1805.09501
(2018)
\end{botherref}
\endbibitem

\bibitem[\protect\citeauthoryear{Liang et~al.}{2018}]{liang2018understanding}
\begin{barticle}
\bauthor{\bsnm{Liang}, \binits{D.}},
\bauthor{\bsnm{Yang}, \binits{F.}},
\bauthor{\bsnm{Zhang}, \binits{T.}},
\bauthor{\bsnm{Yang}, \binits{P.}}:
\batitle{Understanding mixup training methods}.
\bjtitle{IEEE access}
\bvolume{6},
\bfpage{58774}--\blpage{58783}
(\byear{2018})
\end{barticle}
\endbibitem

\bibitem[\protect\citeauthoryear{Gramacki}{2017}]{gramacki2017kernel}
\begin{bchapter}
\bauthor{\bsnm{Gramacki}, \binits{A.}}:
\bctitle{Kernel density estimation}.
In: \bbtitle{Nonparametric Kernel Density Estimation and Its Computational Aspects},
pp. \bfpage{25}--\blpage{62}.
\bpublisher{Springer}, \blocation{???}
(\byear{2017})
\end{bchapter}
\endbibitem

\end{thebibliography}
\appendix
\section{Computational Examples}
\begin{example}\label{ex1}
Let the input to the final convolutional layer be a single{\color{black}-}channel {\color{black}image} of size \( 3 \times 3 \):

\[
X^l = 
\begin{bmatrix}
x_{11} & x_{12} & x_{13} \\
x_{21} & x_{22} & x_{23} \\
x_{31} & x_{32} & x_{33}
\end{bmatrix} \in \mathbb{R}^{1 \times 3 \times 3}{\color{red}.}
\]

We apply two $2 \times 2$ convolutional filters $k_1, k_2 \in \mathbb{R}^{2 \times 2}$ with stride 1 and no padding. The number of output channels (i.e., classes) is \( C_{\text{out}} = 2 \){\color{black}.} Let:

\[
k_1 = 
\begin{bmatrix}
k_1^{(1,1)} & k_1^{(1,2)} \\
k_1^{(2,1)} & k_1^{(2,2)}
\end{bmatrix},
\quad
k_2 = 
\begin{bmatrix}
k_2^{(1,1)} & k_2^{(1,2)} \\
k_2^{(2,1)} & k_2^{(2,2)}
\end{bmatrix}{\color{black}.}
\]

The vectorized filters are
\[
k_1^{\text{vec}} = 
\begin{bmatrix}
k_1^{(1,1)} \\ k_1^{(1,2)} \\ k_1^{(2,1)} \\ k_1^{(2,2)}
\end{bmatrix},
\quad
k_2^{\text{vec}} = 
\begin{bmatrix}
k_2^{(1,1)} \\ k_2^{(1,2)} \\ k_2^{(2,1)} \\ k_2^{(2,2)}
\end{bmatrix}
\in \mathbb{R}^{4}{\color{black}.}
\]

The vectorized input patches \( \phi_r \in \mathbb{R}^4 \), for \( r = 1,\dots,4 \), are extracted by sliding a \( 2 \times 2 \) window over the input{\color{black}:}

\[
\phi_1 = 
\begin{bmatrix}
x_{11} \\ x_{12} \\ x_{21} \\ x_{22}
\end{bmatrix}, \quad
\phi_2 = 
\begin{bmatrix}
x_{12} \\ x_{13} \\ x_{22} \\ x_{23}
\end{bmatrix}, \quad
\phi_3 = 
\begin{bmatrix}
x_{21} \\ x_{22} \\ x_{31} \\ x_{32}
\end{bmatrix}, \quad
\phi_4 = 
\begin{bmatrix}
x_{22} \\ x_{23} \\ x_{32} \\ x_{33}
\end{bmatrix}{\color{black}.}
\]

At each spatial position \( r \){\color{black}, we compute the output for both filters:}

\[
\begin{bmatrix}
z_1^{(1)} \\
z_1^{(2)}
\end{bmatrix}, \quad
\begin{bmatrix}
z_2^{(1)} \\
z_2^{(2)}
\end{bmatrix}, \quad
\begin{bmatrix}
z_3^{(1)} \\
z_3^{(2)}
\end{bmatrix}, \quad
\begin{bmatrix}
z_4^{(1)} \\
z_4^{(2)}
\end{bmatrix}{\color{black}.}
\]

{\color{black}We can organize these outputs into feature maps:}
\[
Z^{(1)} =
\begin{bmatrix}
z_1^{(1)} & z_2^{(1)} \\
z_3^{(1)} & z_4^{(1)}
\end{bmatrix}, \quad
Z^{(2)} =
\begin{bmatrix}
z_1^{(2)} & z_2^{(2)} \\
z_3^{(2)} & z_4^{(2)}
\end{bmatrix}{\color{black},}
\]
where

\begin{align}\label{z_r}
z_r^{(j)} = \phi_r^\top k_j^{\text{vec}}, \quad \text{where} \quad \phi_r \in \mathbb{R}^4, \quad k_j^{\text{vec}} \in \mathbb{R}^4.
\end{align}

{\color{black}Applying global average pooling:}
\begin{align}\label{z_bar_ex}
\bar{z}^{(j)} = \frac{1}{4} \sum_{r=1}^{4} z_r^{(j)} = \left( \frac{1}{4} \sum_{r=1}^{4} \phi_r \right)^\top k_j^{\text{vec}} = \bar{\phi}^\top k_j^{\text{vec}}{\color{black},}
\end{align}
where $j=1,2$ and $r=1,\ldots,4$. {\color{black}The} average input patch {\color{black}is}
\[
\bar{\phi} = \frac{1}{4} \sum_{r=1}^{4} \phi_r \in \mathbb{R}^{4}{\color{black}.}
\]

{\color{black}The} logits ${\color{black}\bar{z}} \in \mathbb{R}^{2}$ are {\color{black}then} passed through a softmax function to produce predicted class probabilities:
\begin{align}\label{softmax_ex}
\hat{y}^{(j)} = \frac{e^{\bar{z}^{(j)}}}{\sum_{l=1}^2 e^{\bar{z}^{(l)}}}, \quad j = 1, 2{\color{black}.} 
\end{align}

Let \( y^{(j)}\) be the one-hot encoded ground truth label {\color{black}for class $j$}. The cross-entropy loss {\color{black}is}
\[
\mathcal{L} = -\sum_{j=1}^2 y^{(j)} \log \hat{y}^{(j)}{\color{black}.}
\]

{\color{black}Here:}
\begin{itemize}
\item $y^{(j)} \in \{0,1\}$ {\color{black}indicates whether class $j$ is the true label}.
\item $\hat{y}^{(j)}$ is the softmax probability for class $j$, computed in {\color{black}equation (\ref{softmax_ex})}.
\end{itemize}

The gradient of the loss with respect to the kernel weights \( k_j^{\text{vec}} \) is computed as
\begin{equation}
\nabla_{k_j^{\text{vec}}} \mathcal{L}=
\frac{\partial \mathcal{L}}{\partial k_j^{\text{vec}}} = \frac{\partial \mathcal{L}}{\partial \bar{z}^{(j)}} \cdot \frac{\partial \bar{z}^{(j)}}{\partial k_j^{\text{vec}}}{\color{black}.}
\end{equation}

{\color{black}From equation (\ref{z_bar_ex}), the} derivative of $\bar{z}^{(j)}$ with respect to the kernel is simply
\begin{equation}
\frac{\partial \bar{z}^{(j)}}{\partial k_j^{\text{vec}}} = \bar{\phi}{\color{black}.}
\end{equation}

Then, we compute the derivative of the loss with respect to the logit \( \bar{z}^{(j)} \){\color{black}:}
\begin{equation}
\frac{\partial \mathcal{L}}{\partial \bar{z}^{(j)}} = \sum_{l=1}^2 \frac{\partial \mathcal{L}}{\partial \hat{y}^{(l)}} \cdot \frac{\partial \hat{y}^{(l)}}{\partial \bar{z}^{(j)}}{\color{black},}
\end{equation}
with
\[
\frac{\partial \mathcal{L}}{\partial \hat{y}^{(l)}} = -\frac{y^{(l)}}{\hat{y}^{(l)}}, \quad
\frac{\partial \hat{y}^{(l)}}{\partial \bar{z}^{(j)}} = \hat{y}^{(l)} (\delta_{lj} - \hat{y}^{(j)}){\color{black},}
\]
where $\delta_{lj}$ is the Kronecker delta{\color{black}:}
\begin{align}\label{kroneck1}
\delta_{lj} = 
\begin{cases}
1 & \text{if } l = j, \\
0 & \text{if } l \ne j.
\end{cases} 
\end{align}

{\color{black}Substituting yields}
\begin{align}
\frac{\partial \mathcal{L}}{\partial \bar{z}^{(j)}} &= -\sum_{l=1}^{2} \frac{y^{(l)}}{\hat{y}^{(l)}} \cdot \hat{y}^{(l)} (\delta_{lj} - \hat{y}^{(j)}) \\
&= -\sum_{l=1}^{2} y^{(l)} (\delta_{lj} - \hat{y}^{(j)}) \\
&= \hat{y}^{(j)} - y^{(j)}{\color{black}.}
\end{align}

Thus, the full gradient becomes
\begin{equation}
\nabla_{k_j^{\text{vec}}} \mathcal{L} 
= \left( \hat{y}^{(j)} - y^{(j)} \right) \cdot \bar{\phi}{\color{black}.}
\end{equation}

We can write {\color{black}the Jacobian} in {\color{black}compact} form as follows{\color{black}:}
\[
\frac{\partial \bar{z}}{\partial K} =  
\begin{bmatrix}
\bar{\phi}^\top & \mathbf{0} \\
\mathbf{0} & \bar{\phi}^\top
\end{bmatrix}
\in \mathbb{R}^{2\times 8}{\color{black},}
\]
where $\mathbf{0}$ is a {\color{black}row vector of dimension $1\times 4$}. {\color{black}By the chain rule for second derivatives,}
\[
\nabla^2_K \mathcal{L} = 
\left( \frac{\partial \bar{z}}{\partial K} \right)^\top \cdot \left( \nabla^2_{\bar{z}} \mathcal{L} \right) \cdot \left( \frac{\partial \bar{z}}{\partial K} \right){\color{black},}
\]
{\color{black}which gives}
\[
\nabla^2_K \mathcal{L} =
\begin{bmatrix}
\bar{\phi} & \mathbf{0} \\
\mathbf{0} & \bar{\phi}
\end{bmatrix}
\begin{bmatrix}
\hat{y}^{(1)} (1 - \hat{y}^{(1)}) & -\hat{y}^{(1)} \hat{y}^{(2)} \\
-\hat{y}^{(1)} \hat{y}^{(2)} & \hat{y}^{(2)} (1 - \hat{y}^{(2)}) 
\end{bmatrix} 
\begin{bmatrix}
\bar{\phi}^\top & \mathbf{0} \\
\mathbf{0} & \bar{\phi}^\top
\end{bmatrix}
\in \mathbb{R}^{8 \times 8}{\color{black}.}
\]

Finally, the trace of the Hessian is:
\[
\operatorname{Tr} \left( \nabla^2_K \mathcal{L} \right)
= \left( \hat{y}^{(1)} (1 - \hat{y}^{(1)}) + \hat{y}^{(2)} (1 - \hat{y}^{(2)}) \right)
\cdot \| \bar{\phi} \|^2{\color{black}.}
\]

{\color{black}We can express this more explicitly. Let}
\begin{align*}
\Phi= [\phi_1 ~\phi_2~ \phi_3~\phi_4] \in \mathbb{R}^{4\times 4}{\color{red}.}
\end{align*}
{\color{black}Then}
\begin{align*}
\bar{\phi}= \frac{1}{4}\sum_{r=1}^4 \phi_r =   \frac{1}{4}\begin{bmatrix} \sum_{r=1}^4 \Phi_{1r} \\ \sum_{r=1}^4 \Phi_{2r}\\ \sum_{r=1}^4 \Phi_{3r}\\ \sum_{r=1}^4 \Phi_{4r} \end{bmatrix}{\color{black},}  
\end{align*}
{\color{black}where each $\Phi_{ir}$ denotes the $i$-th component of patch $\phi_r$.}

{\color{black}The squared norm is}
\begin{align*}
\bar{\phi}^\top\cdot\bar{\phi}= \frac{1}{16}\sum_{i=1}^4\left(\sum_{r=1}^4 \Phi_{ir}\right)^2{\color{black},}
\end{align*}
{\color{black}and thus} the trace is 
\begin{align}\label{trace2}
\operatorname{Tr} \left( \nabla^2_K \mathcal{L} \right)
&= \left( \hat{y}^{(1)} (1 - \hat{y}^{(1)}) + \hat{y}^{(2)} (1 - \hat{y}^{(2)}) \right) \nonumber\\
&\quad \cdot \left(\frac{1}{16}\sum_{i=1}^4\left(\sum_{r=1}^4 \Phi_{ir}\right)^2\right){\color{black}.}
\end{align}
\end{example} 

\begin{example}\label{ex2}
Consider {\color{black}an input to} a convolutional layer with two input channels, each of spatial size \(3 \times 3\):

\[
X^{(1)} = 
\begin{bmatrix}
x_{11}^{(1)} & x_{12}^{(1)} & x_{13}^{(1)} \\
x_{21}^{(1)} & x_{22}^{(1)} & x_{23}^{(1)} \\
x_{31}^{(1)} & x_{32}^{(1)} & x_{33}^{(1)} \\
\end{bmatrix}, \quad
X^{(2)} = 
\begin{bmatrix}
x_{11}^{(2)} & x_{12}^{(2)} & x_{13}^{(2)} \\
x_{21}^{(2)} & x_{22}^{(2)} & x_{23}^{(2)} \\
x_{31}^{(2)} & x_{32}^{(2)} & x_{33}^{(2)} \\
\end{bmatrix}{\color{red}.}
\]

We apply a convolutional layer with two \(2 \times 2\) kernels (each operating across both channels), stride \(1\), and no padding. The output of each kernel is a \(2 \times 2\) feature map. This yields four spatial locations, indexed by
\[
r = 1, 2, 3, 4 \quad \text{corresponding to spatial positions } (0,0), (0,1), (1,0), (1,1){\color{red}.}
\]

At {\color{black}each} spatial location \( r \in \{1,2,3,4\} \), we extract a local \(2 \times 2\) patch from each input channel. Since we have two channels, each {\color{black}spatial position produces} two \(2 \times 2\) blocks. {\color{black}For each channel $c \in \{1,2\}$, we have:}

\[
\phi_r^{(c)} = 
\begin{bmatrix}
x^{(c)}_{r,1} \\
x^{(c)}_{r,2} \\
x^{(c)}_{r,3} \\
x^{(c)}_{r,4}
\end{bmatrix}{\color{black},}
\]
where \( x^{(c)}_{r,i} \) refers to the \( i \)-th pixel in the patch from channel \( c \) at location \( r \).

{\color{black}For each kernel $k \in \{1, 2\}$ and channel $c$, the convolution output (logit) at position $r$ is}
\[
z_{r,k}^{(c)} = \phi_r^{(c)\top} \boldsymbol{w}_{k,c}{\color{black},}
\]
{\color{black}where $\boldsymbol{w}_{k,c} \in \mathbb{R}^4$ is the weight vector for kernel $k$ applied to channel $c$.}

{\color{black}After computing all spatial outputs, we apply} global average pooling{\color{black}:}
\[
\bar{z}^{(c)}_k = \frac{1}{4} \sum_{r=1}^4 z_{r,k}^{(c)} = \left( \frac{1}{4} \sum_{r=1}^4 \phi_r^{(c)} \right)^\top \boldsymbol{w}_{k,c} = \bar{\phi}^{(c)\top} \boldsymbol{w}_{k,c}{\color{black},}
\]
{\color{black}where $\bar{\phi}^{(c)} = \frac{1}{4}\sum_{r=1}^4 \phi_r^{(c)}$ is the average patch for channel $c$.}

{\color{black}To obtain the final logit for kernel $k$, we sum across channels:}
\[
\bar{z}_k = \sum_{c=1}^{2} \bar{z}^{(c)}_k{\color{black}.}
\]

The softmax is then applied {\color{black}to produce class probabilities:}
\[
\hat{y}^{(k)} = \frac{e^{\bar{z}_k}}{\sum_{t=1}^{2} e^{\bar{z}_t}}{\color{black}.}
\]

The trace of the Hessian for {\color{black}the} softmax {\color{black}+} cross-entropy loss with respect to {\color{black}all kernel weights becomes}
\[
\text{Tr} \left( \nabla^2_K \mathcal{L} \right)
= \sum_{c=1}^2\left( \sum_{k=1}^2 \hat{y}^{(k)} (1 - \hat{y}^{(k)}) \right)
\cdot \left\| \bar{\phi}^{(c)} \right\|^2{\color{black}.}
\]

Let us denote:
\[
\alpha = \sum_{k=1}^2 \hat{y}^{(k)} (1 - \hat{y}^{(k)}){\color{black}.}
\]

Then:
\[
\text{Tr} \left( \nabla^2_K \mathcal{L} \right)
= \alpha \cdot \sum_{c=1}^2 \left\| \bar{\phi}^{(c)} \right\|^2{\color{black},}
\]
where {\color{black}$\alpha$ captures the curvature induced by the softmax operation, and the sum over $c$ accounts for all input channels.}
\end{example}
where $L$ is the maximal loss, and $\tau_2, \alpha, \beta$ are kernel-dependent and density-concentration constants.
\section{Proof of Theorem \ref{thm:relative_flatness_gen}} \label{gen_bound}


\begin{proof}
The derivation of the generalization bound specializes the interpolation-based framework established in \cite{petzka2021relative} to our convolutional architecture. Following their framework, the generalization gap $L_{gen}(f, S)$ is decomposed into two primary components: the \textit{representativeness} error ($\mathcal{E}_{Rep}^{\phi}$) and the \textit{feature robustness} ($\mathcal{E}_{\mathcal{F}}^{\phi}$):
\begin{equation}
    \label{eq:gen_decomposition_app}
    L_{gen}(f, S) \leq |\mathcal{E}_{Rep}^{\phi}(f, S, \Lambda_{\delta})| + |\mathcal{E}_{\mathcal{F}}^{\phi}(f, S, \mathcal{A}_{\delta})|.
\end{equation}

Feature robustness measures the model's expected loss deviation when the feature vectors are subject to small multiplicative perturbations defined by a matrix $A$. For our model $f(x, k) = g(k\phi(x))$, where $\phi(x)$ is the output of the convolutional backbone and $k$ represents the weights of the final $1\times 1$ convolutional classifier, a perturbation in the feature space $\phi(x) + A\phi(x)$ is mathematically equivalent to a parameter perturbation $k + kA$. 

Applying a second-order Taylor expansion around a local minimum $\omega$ of the empirical risk, and averaging over the set of orthogonal matrices $\mathcal{O}_m$ equipped with a Haar measure, Hutchinson's trick allows us to bound the expected deviation using the trace of the Hessian. Under the assumption that the distribution has approximately locally constant labels of order three, the feature robustness is strictly bounded by our convolutional relative flatness measure $\kappa_{Tr}^{\phi}(\omega)$:
\begin{equation}
    \mathcal{E}_{\mathcal{F}}^{\phi}(f, S, \mathcal{A}_{\delta}) \leq \frac{\delta^2}{2m} \kappa_{Tr}^{\phi}(\omega) + \mathcal{O}(\delta^3), \quad \text{for } 0 \leq \alpha \leq \delta.
\end{equation}

The representativeness term captures the error inherent in approximating the true data distribution $\mathcal{D}$ using a finite sample set $S$ in the feature space $\mathbb{R}^m$. By employing multivariate kernel density estimation (KDE), \cite{petzka2021relative} establishes that for a smooth feature density $p_{\mathcal{D}}^{\phi}$, the interpolation error is bounded with probability $1-\Delta$ by:
\begin{equation}
    |\mathcal{E}_{Rep}^{\phi}(f, S, \Lambda_{\delta})| \leq \left( C_1(p_{\mathcal{D}}^{\phi}, L) + \frac{C_2(p_{\mathcal{D}}^{\phi}, L)}{\sqrt{\Delta}} \right) \delta^2 \tau_2 + \mathcal{O}(\delta^3).
\end{equation}
The distributional constants $C_1$ and $C_2$ isolate the geometric complexity of the feature manifold from the model architecture (Proof of Theorem 6 in \cite{petzka2021relative}), defined explicitly as:
\begin{itemize}
    \item $C_1 = \tau_2 L \left| \int_{z} \nabla^2 \left( p_{\mathcal{D}}^\phi(z) \|z\|^2 \right) dz \right|$, which represents the bias introduced by the Laplacian (curvature) of the weighted feature density.
    \item $C_2 = \sqrt{\alpha \beta} L \sqrt{\text{Vol}(\phi(\mathcal{D}))}$, which bounds the variance inherent in finite sampling, where $\alpha$ and $\beta$ are kernel-dependent concentration constants.
\end{itemize}
To ensure the bounds converge optimally, we must balance the bias from the feature robustness ($\delta^2$) against the variance from the representativeness estimation. By choosing the optimal bandwidth $\delta = |S|^{-\frac{1}{4+m}}$, we substitute $\delta$ into both component bounds. This yields the final generalization envelope:
\begin{equation}
    L_{gen}(f, S) \lesssim |S|^{-\frac{2}{4+m}} \left( \frac{\kappa_{Tr}^{\phi}(\omega)}{2m} + C_1 + \frac{C_2}{\sqrt{\Delta}} \right).
\end{equation}
This concludes the proof, demonstrating that the convolutional relative flatness serves as a theoretically justified upper bound for the generalization gap when calibrated by the feature manifold's inherent complexity.
\end{proof}}
\section{Additional Experimental Details}
\subsection{Detailed Analysis of Cross-Architecture Correlations}\label{app:detailed_correlation}

To provide a clear view of the relationship between the proposed symbolic flatness measure ($\kappa^l$) and the generalization gap, we present a multi-scale visualization in Figure~\ref{fig:cross_arch}. The analysis is organized into a $3 \times 3$ grid where each row corresponds to a specific architecture (ResNet-18, VGG-16, and DenseNet-121) and each column offers a different perspective on the data distribution.

\paragraph{Linear Scale Analysis (Left Column):} 
The first column displays the raw correlation on a linear scale. In architectures like ResNet-18 and DenseNet-121, a clear upward trend is visible, indicating that as the flatness of the local minimum increases, the generalization ability tends to degrade. For VGG-16, the linear plot highlights the presence of distinct clusters corresponding to different optimizers; models trained with AdamW often reside in significantly "sharper" regions compared to those trained with SGD.

\paragraph{Log-Scale Transformation (Center Column):} 
Due to the high dynamic range of the flatness values across different hyperparameter configurations (often spanning several orders of magnitude), we apply a log-scale transformation to the x-axis in the second column. This transformation reveals a more robust monotonic relationship and stabilizes the Pearson $r$ coefficients. It specifically mitigates the influence of extreme outliers and demonstrates that the correlation persists across a vast landscape of flatness values.

\paragraph{Flatness Density Distribution (Right Column):} 
The third column utilizes Kernel Density Estimation (KDE) \cite{gramacki2017kernel} to visualize the distribution of flatness values sampled during our experiments. These plots confirm that our hyperparameter grid (varying learning rates, weight decays, and batch sizes) successfully explored a wide variety of minima, ranging from highly flat to extremely sharp. The overlap between high-density regions and low generalization gaps further supports the theoretical foundation of our metric.

\begin{figure}[h!]
\centering
\includegraphics[width=\textwidth]{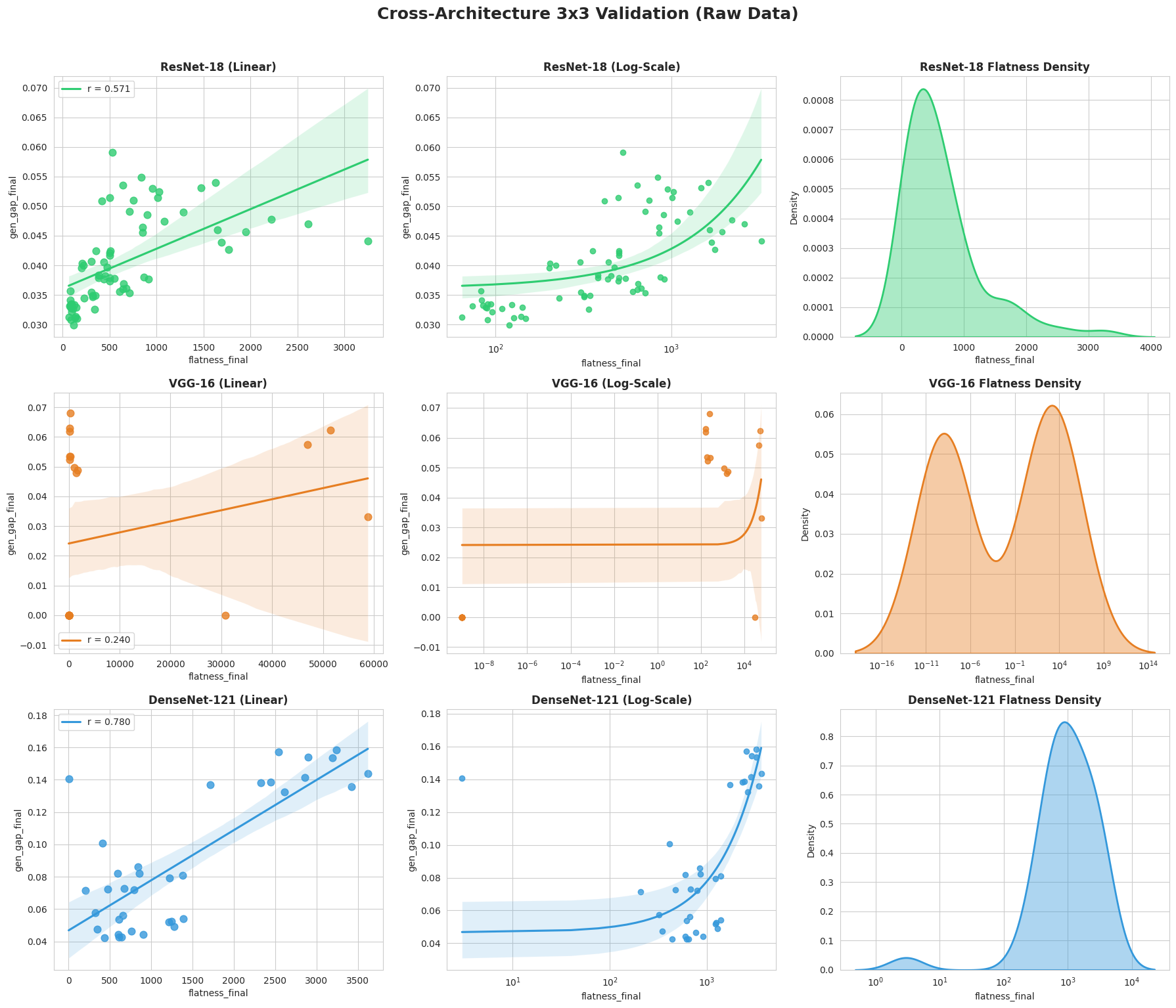}
\caption{Cross-architecture validation of flatness-generalization correlation. Each subplot shows independently trained models with diverse hyperparameters. The columns represent linear scale (left), log-scale (center), and flatness density distribution (right). All architectures exhibit a consistent correlation between the proposed flatness metric and the generalization gap.}
\label{fig:cross_arch}
\end{figure}
\subsection{Comprehensive Analysis of Augmentation Landscapes}\label{sec:appendix_aug_details}

In this section, we present a detailed mathematical and visual investigation into how various data augmentation strategies affect the loss landscape and model performance. To ensure the reliability of our findings, each condition was evaluated over 20 independent training runs.

\subsubsection{ Multi-View Correlation Analysis}
To test the robustness of our symbolic flatness measure ($\kappa^l$), we analyze its connection with the generalization gap through three distinct mathematical perspectives:

\begin{itemize}
    \item \textbf{Linear Regression:} We examine the direct relationship to see if a consistent trend exists between flatness and generalization gap across different training seeds.
    \item \textbf{Log-Linear Relationship:} We evaluate the correlation between flatness and the logarithm of the generalization gap. This helps us check if the relationship follows a power-law trend, a common feature in theoretical machine learning bounds.
    \item \textbf{Bivariate Joint Density (KDE):} We use Kernel Density Estimation (KDE) to visualize the stability of each strategy. This shows where most models "settle" in the flatness-generalization plane and confirms that our results are not caused by accidental outliers.
\end{itemize}

\begin{figure}[h!]
    \centering
    \includegraphics[width=1.0\textwidth]{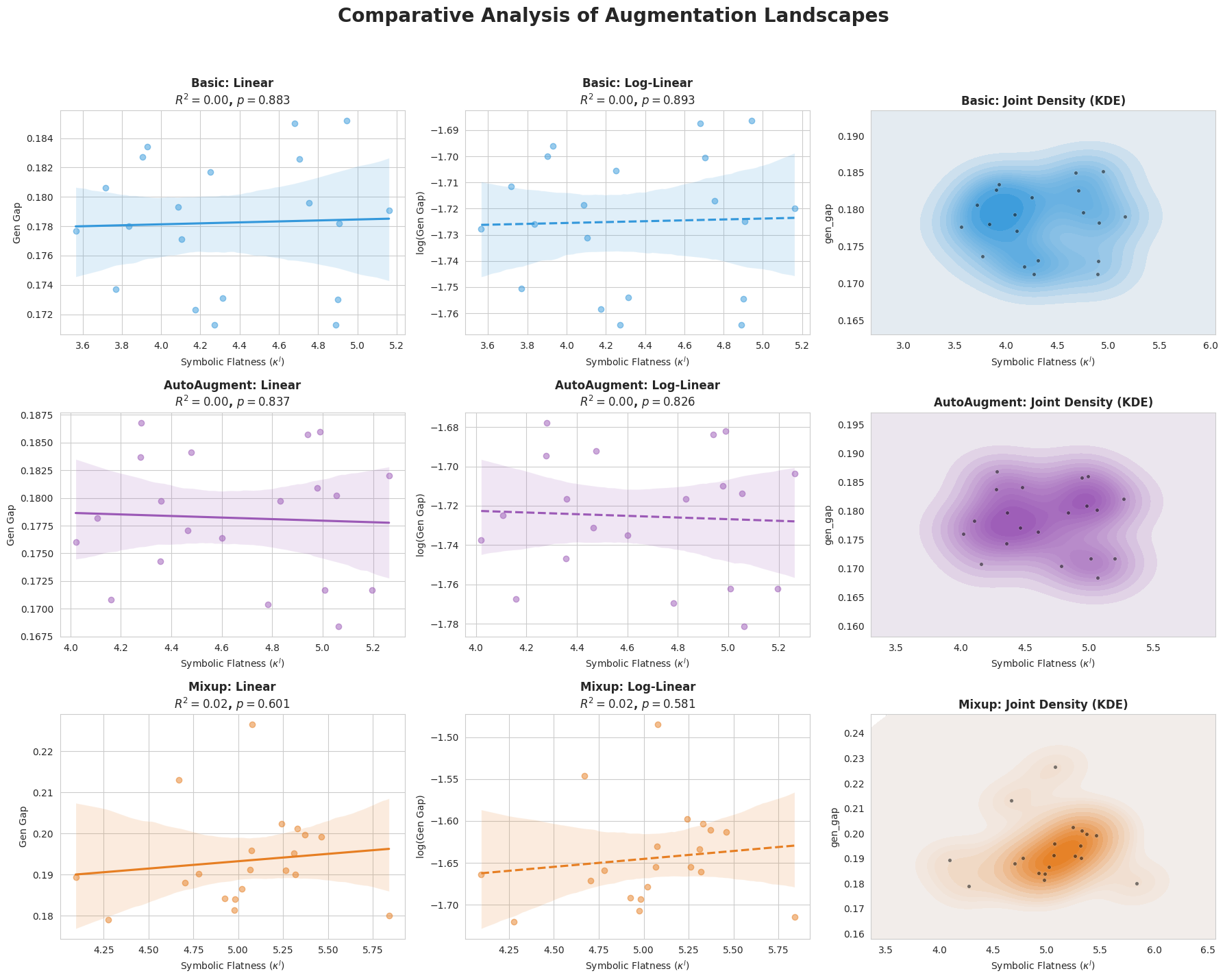}
    \caption{A $3 \times 3$ grid analysis of active augmentation strategies. Each row represents a specific policy (Basic, AutoAugment, Mixup), while columns provide linear, log-linear, and density-based perspectives.}
    \label{fig:appendix_grid}
\end{figure}

\subsubsection{Statistical Omission of the Baseline (None)}
It is important to clarify that the \textit{None} (no augmentation) case is not included in the visual grid in Figure \ref{fig:appendix_grid}. Without the stochastic nature of data augmentation, the models in this group converged to extremely similar regions in the loss landscape.

Mathematically, this lack of variation results in a near-zero variance for both flatness and generalization gap values. In such cases, standard regression analysis cannot be performed because the data points are concentrated in a single tight cluster, leading to statistically undefined ($nan$) $R^2$ and $p$-values. To maintain a clear and useful scale for comparing the active strategies, we report the numerical results for the baseline exclusively in the main text (Table \ref{tab:augmentation}).

\subsubsection{Comparative Insights}
The multi-view plots show that different policies create unique "geometric signatures":
\begin{itemize}
    \item \textbf{Basic and AutoAugment:} These policies stay in a similar regime but provide a clear spread of data points for correlation testing.
    \item \textbf{Mixup:} This strategy moves the model into a regime of significantly higher symbolic flatness. Despite this change in scale, a positive correlation is still observed, which validates that our flatness measure is robust even under complex data-mixing transformations.
\end{itemize}

\end{document}